\pgfplotsset{compat=1.5}
\newtheorem{lemma}{Lemma}
\newtheorem{corollary}{Corollary}
\def\conv{\otimes}
\def\deconv{\oslash}
\def\E{{\mathcal E}}  
\begin{document}
\title{On the Burstiness of Distributed Machine Learning Traffic}
\author{Natchanon Luangsomboon,~Fahimeh Fazel,~J\"{o}rg Liebeherr\\Ashkan Sobhani,~Shichao Guan,~Xingjun Chu
\thanks{
N. Luangsomboon, F. Fazel, and J. Liebeherr  are with the Department of Electrical and Computer Engineering, University of Toronto. A. Sobhani, S. Guan, and Xingjun Chu are with Huawei Canada, Ottawa. 
}
}

\maketitle
\begin{abstract}
Traffic from distributed training of machine learning (ML) models makes up a large and growing fraction of the traffic mix in enterprise data centers. While work on distributed ML abounds,  the network traffic generated by distributed ML has received little attention. 
Using measurements on a testbed network, we investigate the traffic characteristics 
generated by  the training of the ResNet-50 neural network 
with an emphasis on  studying its short-term burstiness. 
For the latter we propose metrics that 
quantify traffic burstiness at different time scales.
Our analysis reveals that distributed ML traffic exhibits a very high degree of burstiness on 
short time scales, exceeding a 60:1 peak-to-mean ratio on time intervals as long as  5~ms. We observe that training software orchestrates 
transmissions in such a way that burst transmissions from different sources 
within the same application do not result 
in congestion and packet losses. 
An extrapolation of the measurement data to multiple applications 
underscores the challenges of distributed ML traffic 
for congestion and flow  control algorithms. 
\end{abstract}

\section{Introduction}
\label{sec:intro}
This paper studies and analyzes the burstiness  of traffic from training deep neural network (DNN) models as a root cause for short-lived surges of traffic, known as microbursts, 
that cause periods of high packet delay and loss in a data center network (DCN) even at a low utilization. 
Since microbursts occur at a time scale of less than 
a millisecond~\cite{microburst1}, traditional traffic control methods 
are not effective with avoiding packet losses in such scenarios. 
Research on microbursts in DCNs has suggested a range of potential 
root causes,  including the inherent burstiness of application traffic, 
confluence of traffic flows to a common destination (fan-in, incast), 
offloading of protocol processing at hosts, and traffic control algorithms, such as packet scheduling and  flow control \cite{microburst1,microburst2,microburst5,microburst6,microburst2-1, microburst3,microburst4,microburst7,microburst8,microburst9}. 
While training of neural networks makes up a large fraction of the workload 
in data centers \cite{facebook-workload-20}, 
to the best of our knowledge, there does not exist a detailed analysis of distributed ML traffic and 
its potential impact on the creation of microbursts.   

The vast majority of network traffic from training DNN models is due to  
the exchange of gradients of model parameters. As modern DNN models involve millions, 
and, in the case of large language models such as GPT, billions  of parameters~\cite{DNNsize}, 
the transmission of gradients  creates  huge data bursts. 
In this paper, we present measurement experiments of the training of ResNet-50, a convolutional neural network for image classification with a total of 25 million parameters~\cite{resnet50}. The measurement experiments are performed in a testbed network with a single switch with 100~Gbps line rates. We evaluate a server-based and a serverless mode of training. 
In server-based training, the nodes involved in the training, referred to as {\it workers}, 
exchange gradients with a dedicated server. Here, the transmissions to the server 
create a bottleneck. Serverless training avoids this pitfall by exchanging gradients in a distributed fashion. The measurement experiments in this paper use 
Ring Allreduce~\cite{ringallreduce}, a widely used technique for serverless gradient aggregation, 
where nodes are arranged in a logical ring, and gradients are disseminated along the ring. 

A challenge for assessing the short-term burstiness of machine learning traffic 
is the lack of suitable metrics for quantifying and comparing burstiness properties. Metrics that are computed over an entire traffic trace, such as the ratio 
of the peak rate to the average rate do not capture the time scale 
at which the maximum traffic rate is observed. Likewise,  percentiles, cumulative distribution functions, or other statistics generally fail to capture isolated phenomena, such as a singular large burst. In this study, we propose a set of  metrics that can quantify the degree of traffic burstiness at different time scales. We show that these metrics are effective not only at extracting whether a traffic trace contains isolated 
large  bursts, but, moreover, identify the time scale at which such bursts occur. The metrics are formulated using concepts from network calculus \cite{Book-LeBoudec,Book-Bouillard}, a theory for a deterministic analysis of networks. 

Our analysis of the training traffic of ResNet-50 and the application of our metrics show that the traffic pattern of model training follows an on-off pattern, where lengthy time periods with 
very few transmissions alternate with time periods that experience a sequence of burst transmissions.  
The on-off pattern reflects the operation of backpropagation 
in DNN training. In an on-phase, the number of observed traffic bursts relates to 
the number of convolution layers of the DNN model. The sizes of 
traffic bursts relate to the number of parameters in  the layers, ranging from a few thousand to several million in ResNet-50. 

We observe that the short-term burstiness of transmissions of workers is extraordinary, with a peak-to-mean ratio as 
high as  60:1 at time scales as long as 5~ms. At the same time, 
for server-based training, we find that the short-term  
burstiness created by multiple workers does not exceed that of a single worker. We are able to trace this back to 
a coordination of transmissions that prevents multiple workers from 
sending bulk transfers concurrently to the server. 

To evaluate how concurrent training applications 
can jointly create congestion, we introduce the notion of 
{\it burstiness potential}. Burstiness potential is a metric 
that evaluates the degree to which aggregated traffic sources 
approach a worst-case alignment.  
We present  a simulation for such a worst-case alignment, which shows the challenge that distributed ML traffic 
poses for congestion control in data center networks, specifically, the 
transmission of large bursts that follow an extended time period with few or no transmissions. 

While our traffic analysis is made for a particular DNN model  and computing environment, the results in this paper provide reference points for future studies of machine learning traffic. Distributed training of convolutional neural network other than ResNet-50 should be expected to yield qualitatively similar results. The involvement of GPUs in training  accelerates computation  and  shift the bottleneck of training from computation toward communication. This will increase the traffic burstiness of the gradient exchanges, while preserving the overall traffic volume. On the other hand,  communication patterns and overheads may vary considerably across different models and different types of training. The degree to which a different gradient aggregation method, such as AlltoAll, Tree Allreduce and other 
methods~\cite{alltoall}, different ML applications, such as natural language processing~\cite{GPT, BERT}, or different 
learning approaches, such as reinforcement learning \cite{dqn}, impact the communication volume and burstiness patterns remains an open question and awaits further investigation.

\section{Metrics for Traffic Burstiness}
\label{sec:metrics}

Traffic burstiness is often used as a qualitative criterion to describe
that network traffic is highly time-variable. 
For quantifying the degree of traffic burstiness,  a frequently used metric is the {\it 
peak-to-mean ratio} (or {\it peak-to-average ratio}), which refers 
to the ratio of the peak rate  to the average rate of traffic. Such a scalar 
metric, however, is insufficient for studying short-term bursts since it does not 
account for  different degrees of traffic burstiness that exist at 
different time scales. 
We address this shortcoming by  defining  metrics that provide the 
burstiness of traffic over any given time interval. 
The metrics are defined within the context of the network 
calculus~\cite{Book-LeBoudec,Book-Bouillard}. We will show that 
some of the metrics can be used to isolate the presence of large traffic bursts, 
even if a traffic burst is a singular event.

\subsection{Network Calculus Background}

Essentially, network calculus is a method for characterizing the input-output relationship of 
traffic at a network element, where a network element may represent one or a 
collection of switches, traffic control algorithms, or transmission media. We consider a discrete-time clock, where time has 
values $t = 0, 1,2, \ldots$. Arriving traffic to and departing from a network element is described by functions $A$ and $D$, such that 
$A(t)$ and $D(t)$, respectively, represent the cumulative arrivals and departures at a network element until time $t$,  with $D(t)\le A(t)$.  The service at a network element  is described by another function $S$,  referred to as the element's service curve. 
Like functions $A$ and $D$, a service curve $S$ is non-negative and non-decreasing. For technical reasons we assume $A(t) = D(t) = S(t) =0$ for $t=0$. 
The backlog at a network element $B$ is a function that describes the traffic arrivals that 
have not departed, that is, $B(t) = A(t)-D(t)$. 
The maximum backlog, $B_{\rm max}$, is then determined by 
$B_{\rm max}(t) = \max_{t\ge 0}\{A(t)-D(t)\}$.

A network element transforms  arriving traffic to generate the traffic pattern that departs the network element. With network calculus, the interaction between arriving traffic and a network element is described in terms of a min-plus convolution of the arrival function and the service curve of the network element. The min-plus convolution  of two functions $f$ and $g$, denoted by $f \conv g$, is defined by $f\conv g (t) = \min_{0 \le s \le t} \{f (s) + g (t-s)\}$.  If $D (t) = A \conv S(t)$ holds for all times~$t$, the service curve $S$ is referred to as an exact service curve. If the inequality $D (t) \ge A \conv S(t)$ holds for all~$t$, we speak of a lower (or simple) service curve. 

An egress port at a switch is modeled as a 
work-conserving buffered link with transmission rate $r >0$ 
that offers to arriving traffic an exact service curve 
$S(t) = \max\{rt, 0\}$. 

Since $A(t) - A(s)$ is the traffic arriving in the time interval between $s$ and $t$, arrival functions  can  characterize traffic burstiness for any given time interval. 
However, we are not interested in what happens in any specific time interval, 
but in what happens in an arbitrary time interval of a given length. For instance, we may want to know whether the maximum 
amount of traffic that arrives at a switch port in any interval of length 50\,$\mu$s can exceed $10$\,MB.
This calls for a time-invariant characterization of 
traffic. Such a characterization describes traffic in a time interval without regard to the  
location of the time interval. 
This leads us to the concept of an {\it arrival curve}, 
which  specifies an upper  bound on the amount of traffic in  a given time interval.
A function $E$ is called an {\it arrival curve} for 
an arrival function $A$ if $E (s) \geq A (t+s) - A (t)$ for all $s,t \geq 0$. 
In other words, $E (s)$ is an upper bound on the arrivals in any time interval of length~$s$. 
All subsequently introduced traffic burstiness metrics are based on the computation of a specific arrival curve, referred to as {\it burstiness curve}.

\vspace{5pt}

\subsection{Burstiness Metrics}
\label{subsec:burst-metrics-def}
For a given traffic trace, the cumulative arrival function $A(t)$ simply is the total 
number of bytes that have arrived until time $t$. 

\bigskip
\noindent{\bf Burstiness Curve. } We define the burstiness curve $\E_A$ of an arrival function $A$ as the function 
that describes the maximum amount of traffic that arrives in any time interval. 
The burstiness curve $\E_A$ is defined by 
\begin{align*}
\E_A (\tau)  = \max_{t \ge 0} \{ A(t+\tau) - A(t)\} \, . 
\end{align*}
The burstiness curve is the best possible arrival 
curve for a given arrival function, in the sense that any function smaller than $\E_A $ is not 
an arrival curve for $A$.  The burstiness curve was first introduced in \cite{Wrege1}, 
where it is referred to as {\it empirical envelope}.
The burstiness curve can be equivalently expressed as  
\begin{align*}
\E_A (\tau) & = A \deconv A (\tau)  , 
\end{align*}
where the `$\deconv$' operation is the min-plus deconvolution, which, for two functions $f$ and $g$, 
is defined as $f\deconv g (t) = \max_{s \ge 0} \{f (t+s) - g (s)\}$. 

\bigskip
\noindent{\bf Peak-to-mean ratio function:} For a time interval of length~$\tau$, the 
peak-to-mean ratio ${\rm PtM} (\tau)$ is computed as 
\[
{\rm PtM} (\tau) = \frac{\E_A (\tau)}{\bar{\lambda}\tau} \, , 
\]
where $\bar{\lambda}$ is the average rate of the traffic trace. 
An advantage of the peak-to-mean ratio function over the burstiness curve is that it is unitless on 
the y-axis. This enables us to compare the burstiness of different traffic traces independent  of the volume and rate of traffic. 

\bigskip
\noindent{\bf Maximum backlog function:} 
An alternative method for measuring traffic burstiness is to evaluate the backlog inflicted by traffic at a switch. 
Note that the presence of microbursts is indicated by a high buffer occupancy 
at a switch port, which is -- on average -- only lightly loaded. We next 
provide a metric that determines whether a given traffic  may cause a large backlog at a lightly 
loaded port.  The metric is based on tracking the maximum backlog 
at a switch port where the traffic of interest can utilize a certain portion of the link rate, referred to as {\it available link rate}. (The available link rate  can be viewed as the available bandwidth \cite{availableBW} or, alternatively, the fair share of the total link rate under a fair queueing method.)

Let us denote the maximum backlog for an available link rate $r$ by $B_{\rm max} (r)$. Using that a work-conserving 
link with rate $r$ has an exact service curve $S(t) = \max\{rt, 0\}$, we obtain  
\[
B_{\rm max} (r) = \max_{t \ge 0} \{A (t) - A \conv S (t)\} \, . 
\]
A similar metric has been proposed in 
\cite{Low1,Low2,Low3} for fluid flow traffic.\footnote{In~\cite{Low1,Low2,Low3}, the maximum backlog 
as a function of the link rate is referred to as {\it burstiness curve}. We justify our change of 
terminology by the fact that  $\E_A$ is more directly related to 
the actual burstiness of the traffic.} Specifically, if $m(\tau)$ describes the 
instantaneous arrival rate at time $\tau$ for traffic that is active in the time interval 
$[0,T]$, the maximum backlog in  \cite{Low1,Low2,Low3} 
is computed as 
$B_{\rm max} (r) = \max_{0 \le s \le t \le T} \int_{s}^{t} [m(\tau) - r]\ d\tau$. 

The following lemma 
allows us to relate the maximum backlog to the burstiness curve. 

\begin{lemma} \label{lemma:Bmax}
The maximum backlog of traffic with an arrival function $A$  at a network element  with exact service curve $S$ satisfies $B_{\rm max} = \E_A \deconv S(0)$. 
\end{lemma}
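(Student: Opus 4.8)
The plan is to unfold both sides of the claimed identity using the definitions of min-plus convolution and deconvolution, and to verify that they collapse to the same double maximum over pairs of time indices. I would begin from $B_{\rm max} = \max_{t\ge 0}\{A(t)-D(t)\}$ and use that $D = A\conv S$, which holds because $S$ is an exact service curve. Expanding the convolution and using $-\min(\cdot)=\max(-\cdot)$ gives
\[
A(t) - A\conv S(t) = A(t) - \min_{0\le s\le t}\{A(s)+S(t-s)\} = \max_{0\le s\le t}\{A(t)-A(s)-S(t-s)\},
\]
so that $B_{\rm max} = \max_{t\ge 0}\max_{0\le s\le t}\{A(t)-A(s)-S(t-s)\}$.

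The key step is then a change of variables in this double maximum. Setting $\tau = t-s$, the constraint set $\{(t,s):0\le s\le t\}$ is in bijection with $\{(\tau,s):\tau\ge 0,\ s\ge 0\}$ via $t=s+\tau$. Since $S(\tau)$ does not depend on the inner variable $s$, I can interchange the order of maximization and factor it out:
\[
B_{\rm max} = \max_{\tau\ge 0}\Bigl\{\max_{s\ge 0}\{A(s+\tau)-A(s)\} - S(\tau)\Bigr\} = \max_{\tau\ge 0}\{\E_A(\tau) - S(\tau)\},
\]
where the last equality is just the definition of the burstiness curve. To finish, I would compare this with the right-hand side of the lemma: by definition of min-plus deconvolution, $\E_A\deconv S(0) = \max_{s\ge 0}\{\E_A(0+s) - S(s)\} = \max_{s\ge 0}\{\E_A(s)-S(s)\}$, which is exactly the expression obtained above, and the proof is complete.

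The main point to get right is the re-indexing together with the interchange of the two maxima; this is legitimate because the map $(t,s)\mapsto(t-s,s)$ is a bijection between the two index sets and because $S(\tau)$ is constant in $s$, so nothing is lost or double-counted. If one wants ``$\max$'' to be literally attained rather than read as a supremum, it suffices to invoke the standing assumption that the measured trace is finite, so that $A$ is eventually constant and all maxima effectively run over finite sets. No further machinery is needed — the remainder is a direct rewriting from the definitions of $\conv$, $\deconv$, and $\E_A$ given above, so I do not anticipate a substantive obstacle.
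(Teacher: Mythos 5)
Your proposal is correct and follows essentially the same route as the paper's proof: expand $D = A \conv S$, turn the negated minimum into a maximum, reorder the resulting double maximum so the inner one becomes $\E_A$, and identify the outcome with $\E_A \deconv S(0)$. The only cosmetic difference is that you use the symmetric form of the convolution and make the re-indexing $(t,s)\mapsto(\tau,s)$ explicit, whereas the paper swaps the maxima directly; the arguments are otherwise identical.
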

\begin{proof}
\begin{align*}
B_{\rm max} & = \max_{s \ge 0} \{A (t) - D (t) \} \\
& =  \max_{t \ge 0} \{A (t) - A \conv S (t) \}\\
& =  \max_{t \ge 0} \Bigl\{A (t) - \min_{0 \le s \le t} \{ A (t-s) + S(s) \} \Bigr\} \\
& =  \max_{t \ge 0} \max_{0 \le s \le t} \Bigl\{ A (t) -  A (t-s) - S(s) \Bigr\} \\
& =  \max_{s \ge 0} \Bigl\{ \max_{t \ge s} \{ A (t) -  A (t-s)\}  - S(s) \Bigr\} \\
& = \max_{s \ge 0} \{ \E_A(s) - S(s) \} \\
& = \E_A \deconv S(0) \, . 
\end{align*}
The second line uses that $S$ is an exact service curve. The fifth line results from switching the order of the maxima. Then, the inner maximum is equal to the burstiness curve, which yields the result.
\end{proof}

A classical result of the network calculus states that the backlog for traffic with arrival curve $E$ at a network element with lower service curve $S$ is bounded by $B(t) \le E \deconv S(0)$. Lemma~\ref{lemma:Bmax} states that the bound is tight when the service curve is exact and the arrival curve is tight.

With the lemma we obtain an expression of the maximum backlog function $B_{\rm max} (r)$ in terms of the burstiness curve. 
\begin{corollary}
Consider traffic with arrival function $A$ at a work-conserving link with rate $r$. Then 
\begin{enumerate} 
\item $\displaystyle B_{\rm max} (r)  = \max_{\tau \ge 0} \{ \E_A (\tau) - r\,\tau\}$. 
\item $B_{\rm max} (r)$ is convex.
\end{enumerate}
\end{corollary}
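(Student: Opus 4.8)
The plan is to read off both statements from Lemma~\ref{lemma:Bmax}. For part~1, I would use that a work-conserving link with rate $r$ offers the exact service curve $S(t) = \max\{rt,0\}$, as established earlier in this section. Instantiating Lemma~\ref{lemma:Bmax} with this $S$ and expanding the min-plus deconvolution via $f\deconv g(t) = \max_{s\ge 0}\{f(t+s)-g(s)\}$ gives
\[
B_{\rm max}(r) \;=\; \E_A \deconv S(0) \;=\; \max_{\tau \ge 0}\bigl\{\E_A(\tau) - \max\{r\tau,0\}\bigr\} .
\]
Since the maximization ranges over $\tau \ge 0$ and $r > 0$, the term $\max\{r\tau,0\}$ is just $r\tau$, which yields $B_{\rm max}(r) = \max_{\tau\ge 0}\{\E_A(\tau) - r\tau\}$ after renaming the optimization variable. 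Part~1 therefore amounts to a substitution and presents no real obstacle.

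For part~2, I would fix any interval length $\tau \ge 0$ and observe that $r \mapsto \E_A(\tau) - r\tau$ is affine in $r$ (slope $-\tau$, intercept $\E_A(\tau)$). By part~1, $B_{\rm max}(\cdot)$ is the pointwise supremum over $\tau \ge 0$ of this family of affine functions of $r$, and a pointwise supremum of affine functions is convex. If one prefers a self-contained argument over citing this standard fact: for rates $r_1,r_2$ and $\theta\in[0,1]$, let $\tau^\star$ attain the maximum defining $B_{\rm max}(\theta r_1 + (1-\theta) r_2)$; then
\begin{align*}
B_{\rm max}(\theta r_1 + (1-\theta) r_2)
&= \E_A(\tau^\star) - \bigl(\theta r_1 + (1-\theta) r_2\bigr)\tau^\star \\
&= \theta\bigl(\E_A(\tau^\star) - r_1\tau^\star\bigr) + (1-\theta)\bigl(\E_A(\tau^\star) - r_2\tau^\star\bigr) \\
&\le \theta\, B_{\rm max}(r_1) + (1-\theta)\, B_{\rm max}(r_2),
\end{align*}
where the inequality uses that $\tau^\star$ is feasible, though not necessarily optimal, for $r_1$ and for $r_2$.

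The only points warranting care are bookkeeping: confirming that $\max\{r\tau,0\} = r\tau$ holds throughout the relevant domain, and noting that the claim remains meaningful when $B_{\rm max}(r) = \infty$ (which happens for $r$ below the long-term average rate), in which case $B_{\rm max}$ is convex in the extended-real-valued sense, while on the regime of interest, where $r$ exceeds the average rate, the supremum is finite. Beyond these, I do not anticipate any difficulty.
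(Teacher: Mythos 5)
Your proposal is correct and follows essentially the same route as the paper: part~1 is an instantiation of Lemma~\ref{lemma:Bmax} with the exact service curve $S(t)=\max\{rt,0\}$ of a work-conserving link, and part~2 observes that $B_{\rm max}(r)$ is a pointwise maximum of functions affine (hence convex) in $r$. Your additional bookkeeping (the self-contained convexity inequality and the remark on the extended-real-valued case) is sound but not needed beyond what the paper's own argument already covers.
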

\begin{proof}
The first claim follows from  Lemma~\ref{lemma:Bmax} since $S(t) = \max \{rt, 0\}$ is an 
exact service curve for a work-conserving link with rate $r$. For the second claim, 
note that, for a fixed value of $\tau$, the term inside the maximum of  $B_{\rm max}$ (in the first claim) is a linear decreasing function for $r >0$, and hence convex.  The claim follows since the maximum of convex functions is also convex.  
\end{proof}

When plotting $B_{\rm max} (r)$ for increasing values of $r$, we use the function $r/\bar{\lambda}$ 
on the x-axis.  This normalization with the average traffic rate allows us to relate the $x$-axis 
to the utilization $U$, which is defined as  
\[
U  = \frac{\text{average traffic rate}}{\text{available link rate}} = \frac{\bar{\lambda}}{r} \, . 
\]
Hence, $r/\bar{\lambda}$ corresponds to the inverse utilization $1/U$. The advantage of using the utilization on the x-axis is that it is a unitless measure, which allows us to compare the burstiness of traffic with different characteristics. 

\bigskip
\noindent{\bf Interval-specific maximum backlog:}  
By plotting the maximum backlog for a given available rate $r$ for 
different interval lengths, we can illustrate at which time scales the maximum backlog is attained. 
This metric is simply given by 
\[
B_{\rm max} (\tau ; r)  = \max \{ \E_A (\tau) - r \, \tau \, , 0 \} \, .  
\]

%
%

\begin{figure}[!t]
\centering 

\subfloat[Traffic \underline{with} rate spike.]{\includegraphics[width=0.49\textwidth]{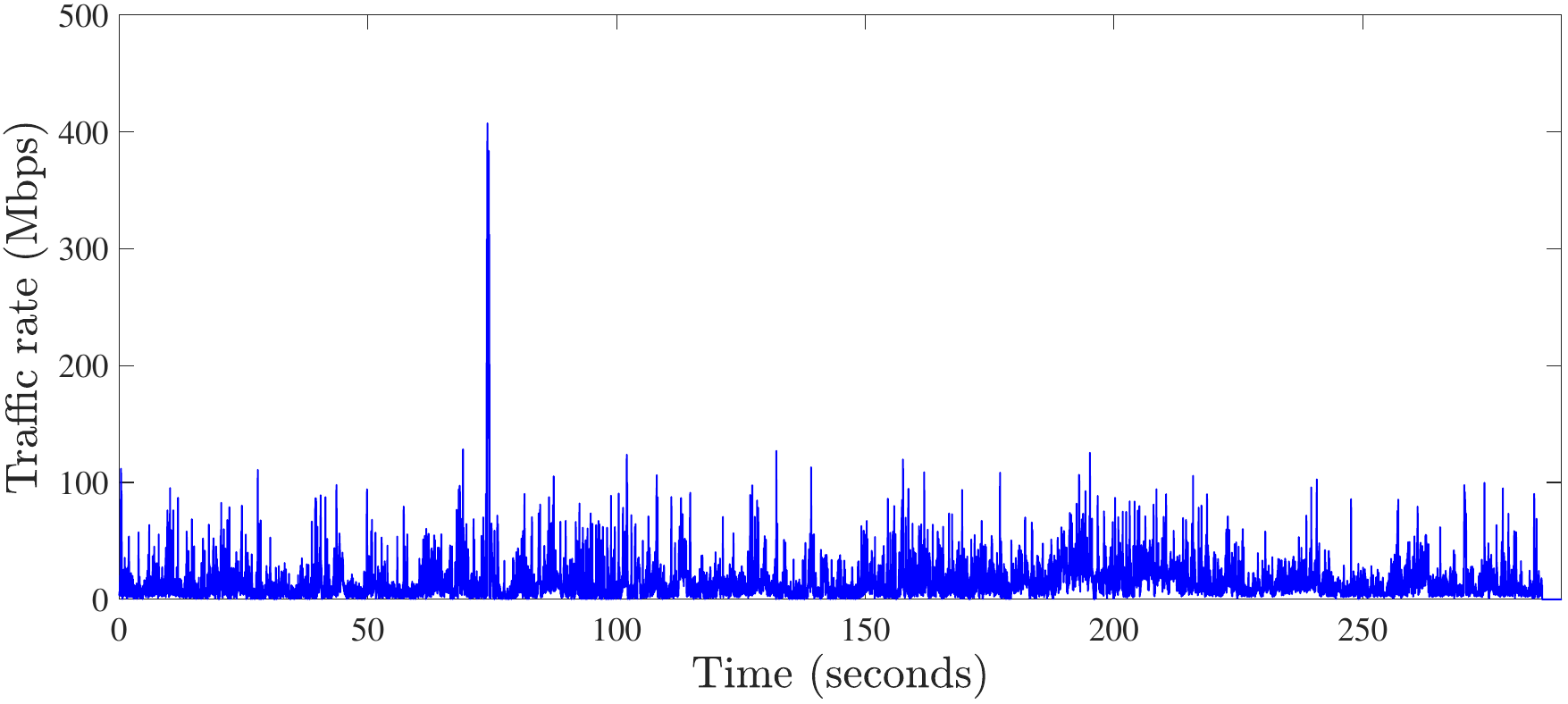}\label{fig:Wisc-univ1-pt11-TrafficRate}}
\hspace{0.2cm}
\subfloat[Traffic \underline{without} rate spike.]{\includegraphics[width=0.49\textwidth]{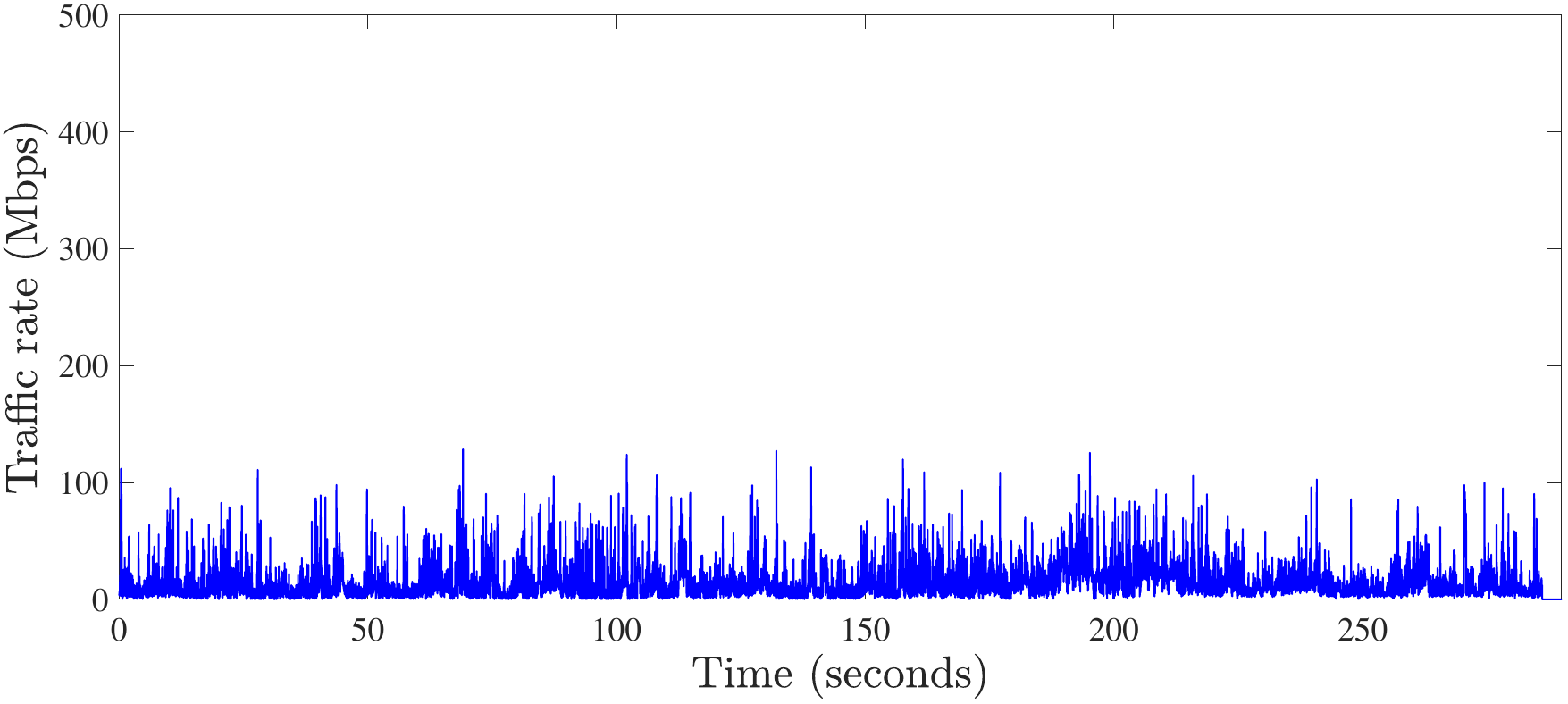}\label{fig:Wisc-univ1-pt11-mod-TrafficRate}}

\vspace{-5pt}
\caption{Traffic rates of DCN traffic trace \underline{with} and \underline{without} large rate spike.} \label{fig:univ1-pt2-T2}
\centering 
\subfloat[Burstiness curve  and average 
traffic.]{\includegraphics[width=0.24\textwidth]{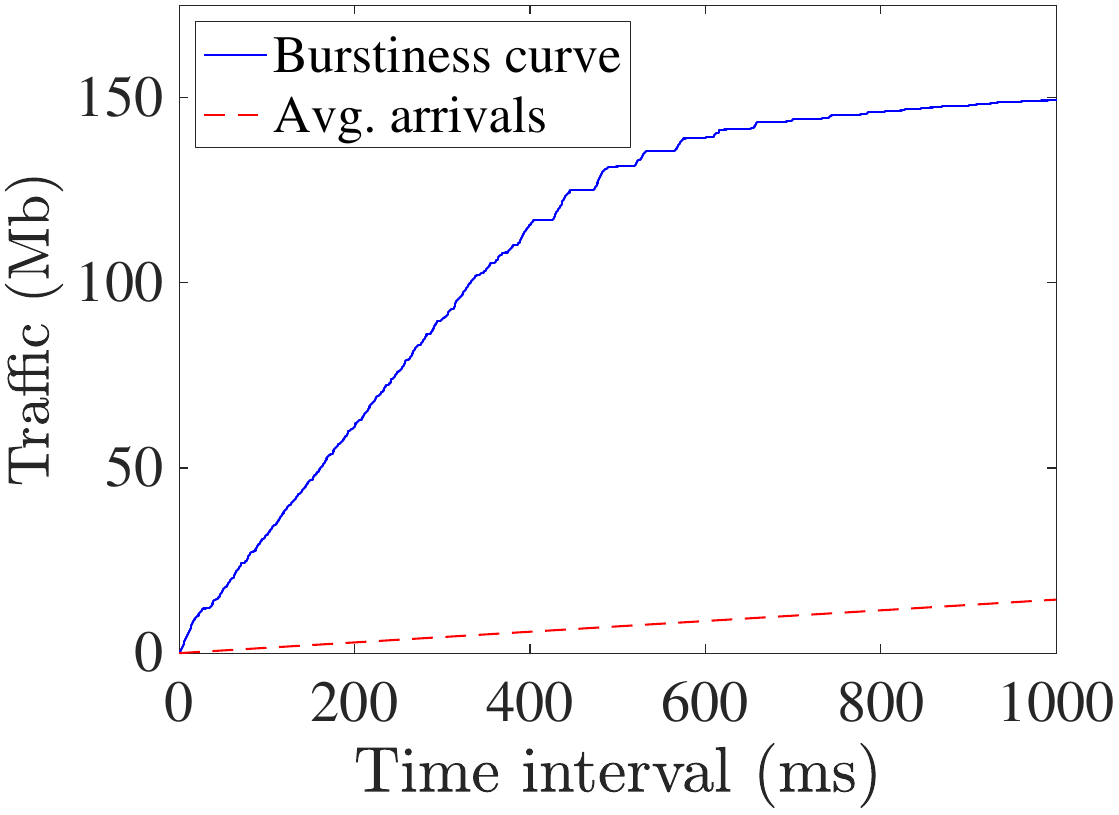}\label{fig:Wisc-univ1-pt11-burstiness}}
%
%
\subfloat[Peak-to-mean 
ratio.]{\includegraphics[width=0.24\textwidth]{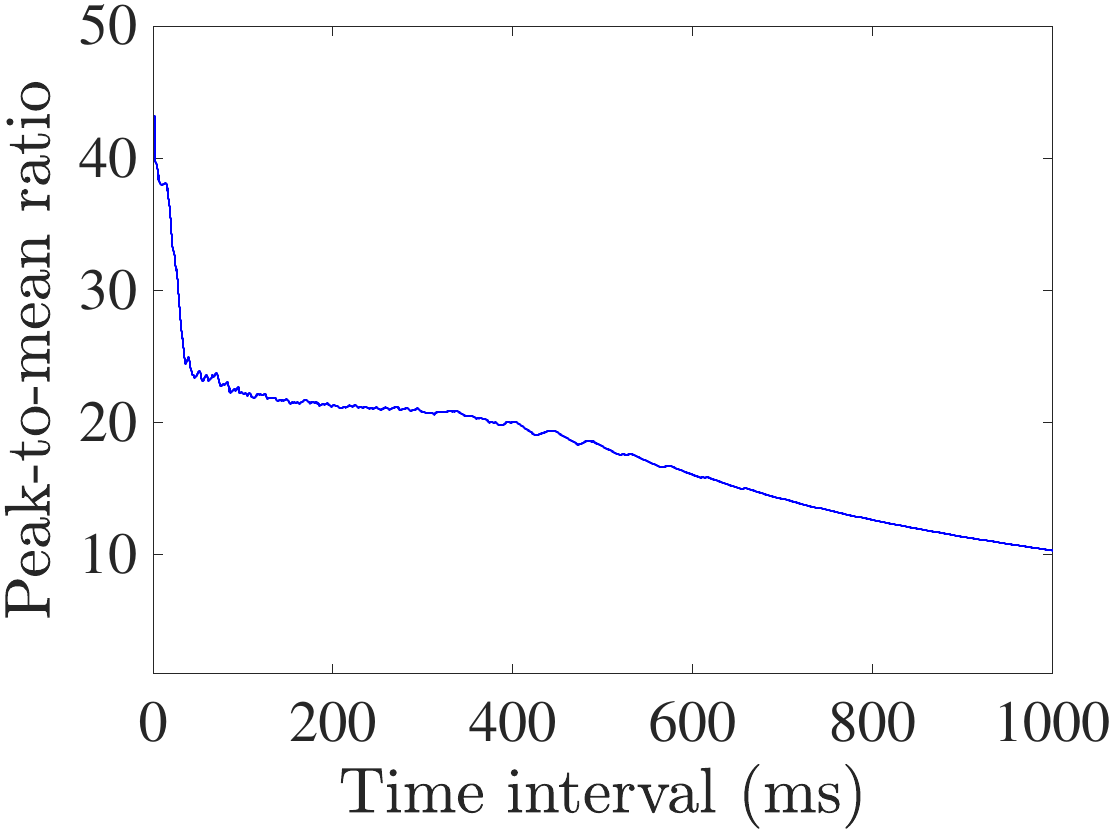}\label{fig:Wisc-univ1-pt11-peak-to-mean}}
%
%
\subfloat[Maximum backlog.]{\includegraphics[width=0.24\textwidth]{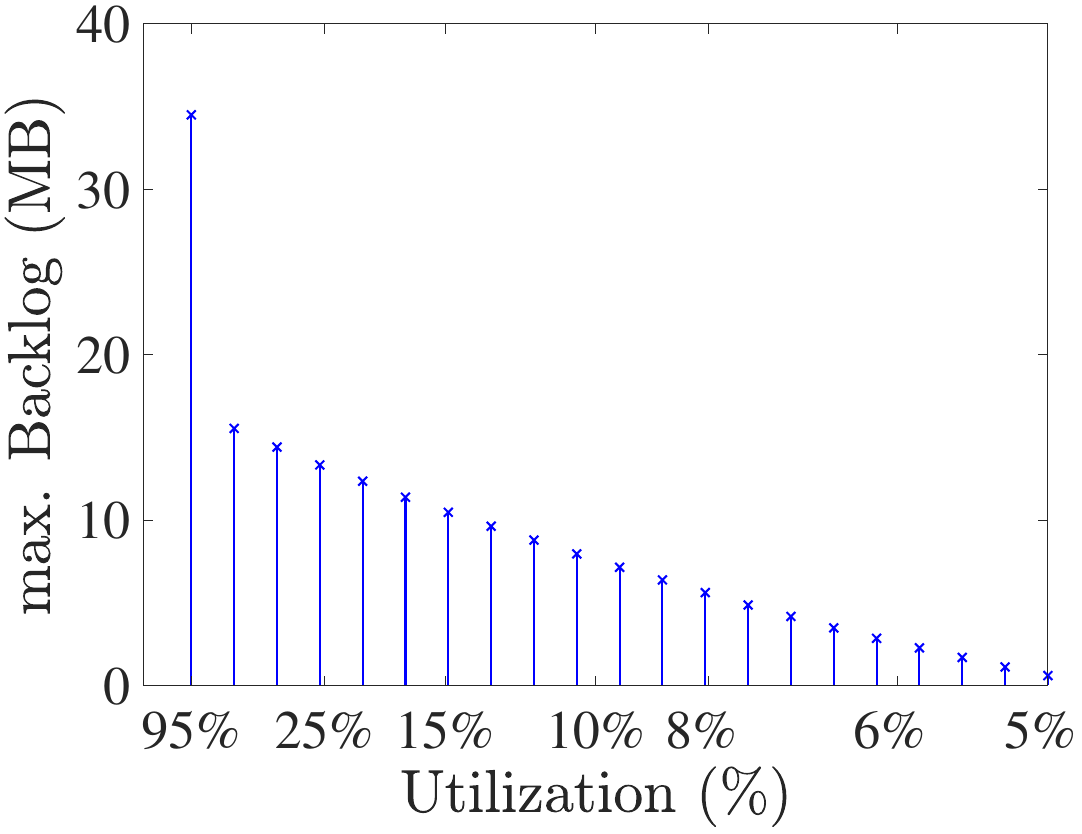}\label{fig:Wisc-univ1-pt11-maxBacklog}}
\subfloat[Interval max. backlog.]{\includegraphics[width=0.24\textwidth]{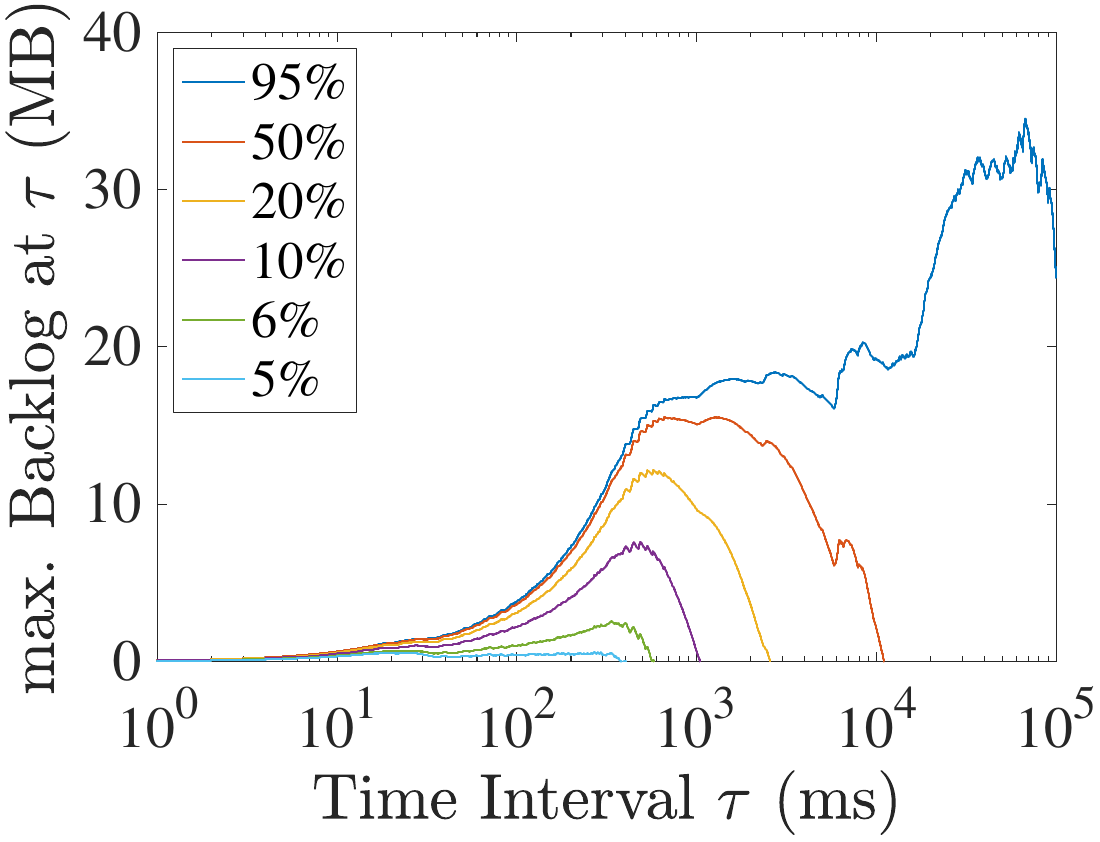}\label{fig:Wisc-univ1-pt11-maxBacklog1-interval}}

\vspace{-5pt}
\caption{Burstiness metrics of DCN traffic trace \underline{with} large rate spike.} \label{fig:univ1-pt2-T2}
%
%
%
%
\centering 
\subfloat[Burstiness curve and average 
traffic.]{\includegraphics[width=0.24\textwidth]{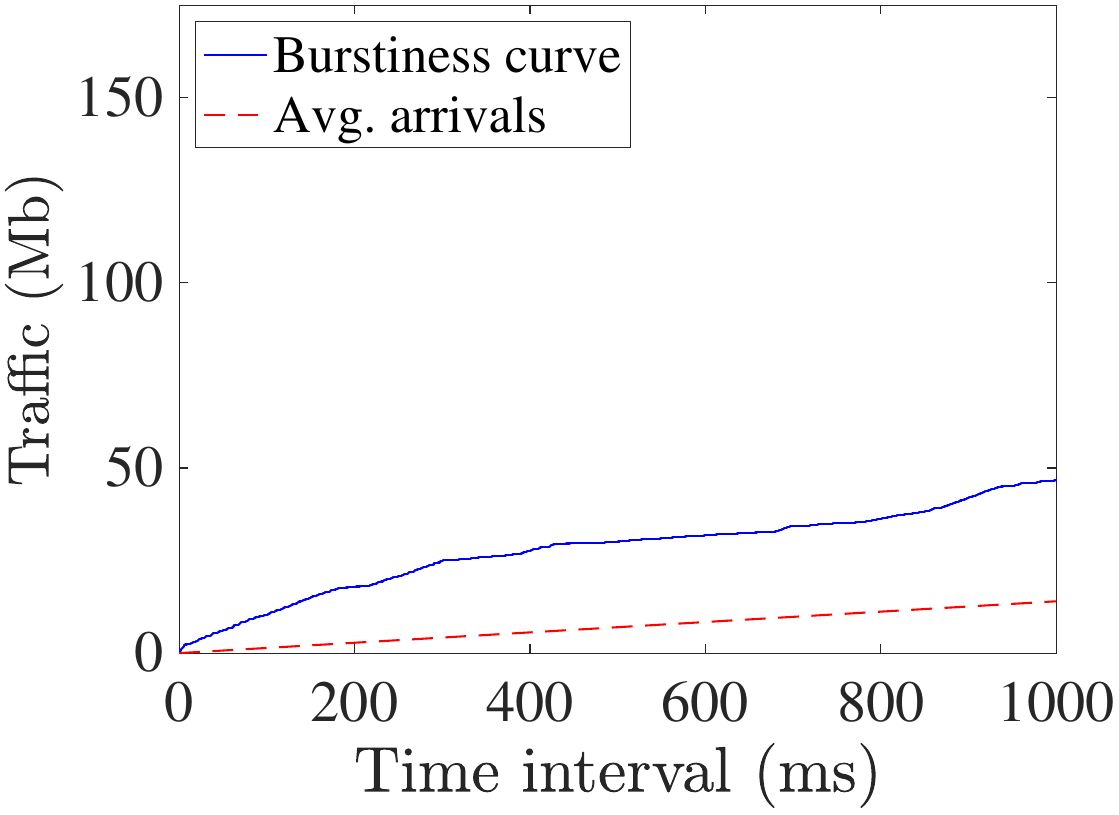}\label{fig:Wisc-univ1-pt11-mod-burstiness}}
%
%
\subfloat[Peak-to-mean 
ratio.]{\includegraphics[width=0.24\textwidth]{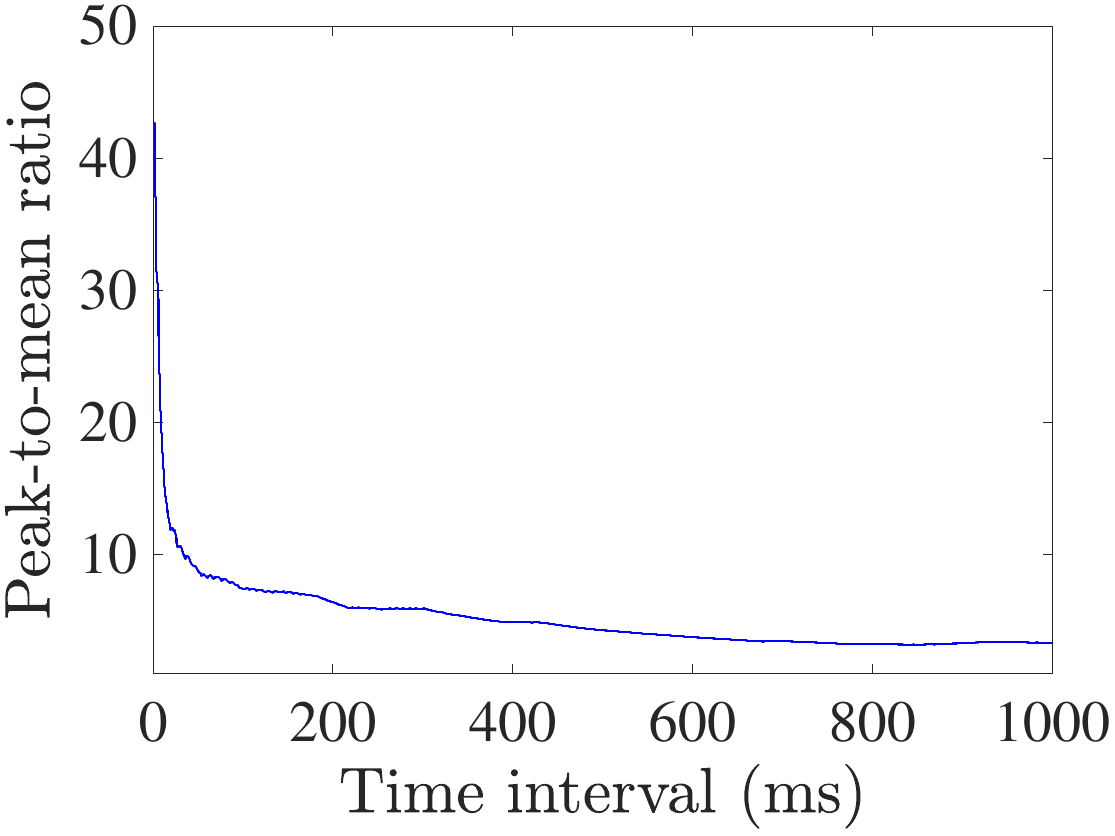}\label{fig:Wisc-univ1-pt11-mod-peak-to-mean}}
%
\subfloat[Maximum backlog.]{\includegraphics[width=0.24\textwidth]{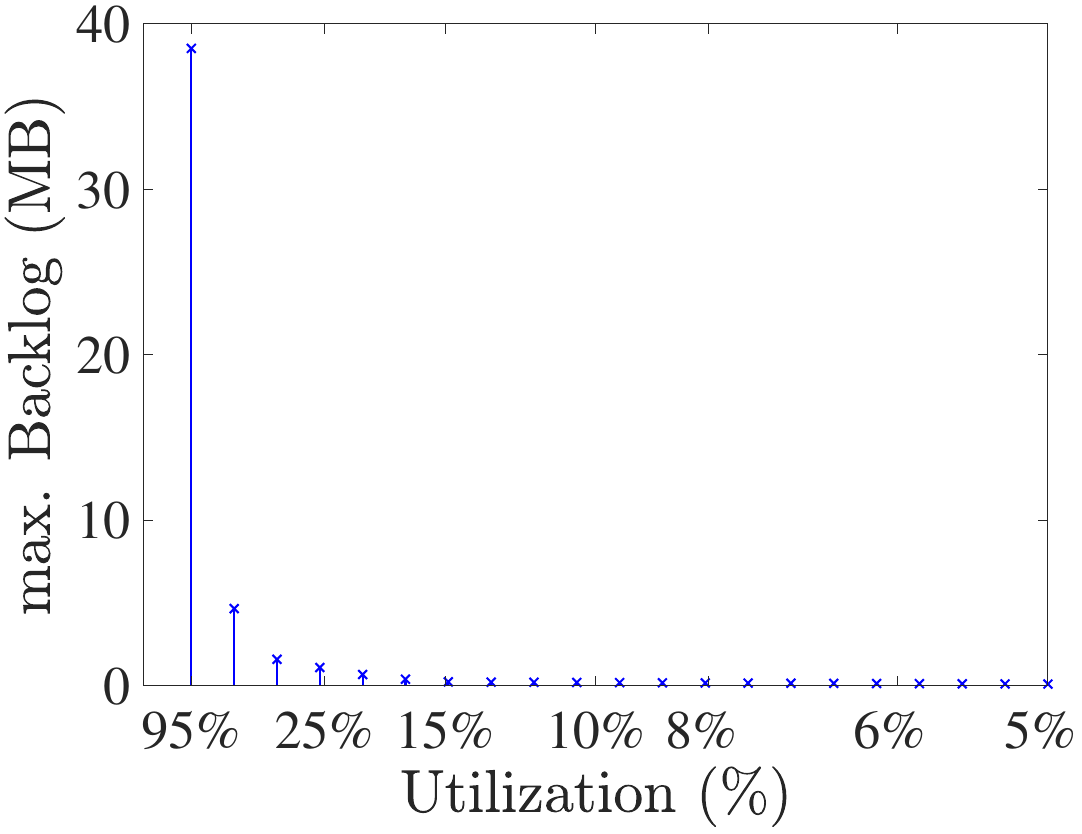}\label{fig:Wisc-univ1-pt11-mod-maxBacklog}}
\subfloat[Interval max. backlog.]{\includegraphics[width=0.24\textwidth]{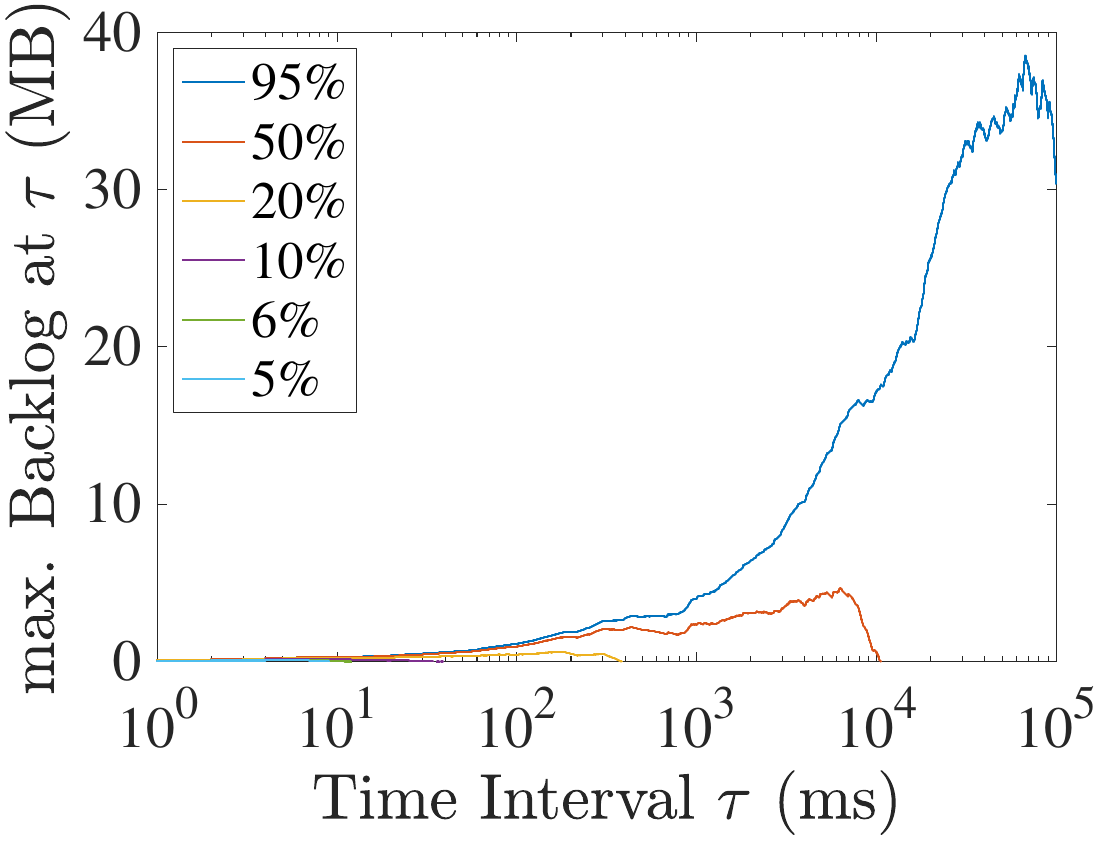}\label{fig:Wisc-univ1-pt11-mod-maxBacklog1-interval}}

\vspace{-5pt}
\caption{Burstiness metrics of manipulated DCN traffic trace \underline{without} rate spike (spike removed).} \label{fig:univ1-pt11-mod}
\end{figure}

\subsection{Examples}

We next demonstrate how the defined  metrics extract information about the presence of 
large bursts. For demonstration, we consider a DCN measurement of a  1\,Gbps switch port in an  operational data center~\cite{dcn-trace-1}, with traffic traces available 
in \cite{wisc-traces}. 
The selected trace\footnote{We use the trace with label `univ1\_pt11' from~\cite{wisc-traces}.} 
has roughly $10^6$ packets sent over a duration of close to 6~minutes with an 
average traffic rate of around 14~Mbps. 
Fig.~\ref{fig:Wisc-univ1-pt11-TrafficRate} depicts the traffic rates of 
the trace, computed for time windows of 20~ms, which 
clearly shows one large rate spike. 

To create a reference for comparison, we manipulate 
this trace by deleting the rate spike. This is done by substituting 
the captured data in the time interval that contains the large burst (we use interval [73.8,74.5]\,s) with data that is copied from a different interval  
with the same duration (interval [173.8,174.5]\,s). 
Fig.~\ref{fig:Wisc-univ1-pt11-mod-TrafficRate} shows the traffic rates 
of the manipulated trace.   

A comparison of the burstiness metrics of the original and manipulated traces, given in 
Figs.~\ref{fig:univ1-pt2-T2} and~\ref{fig:univ1-pt11-mod}  
shows that the presence of the large burst in the original trace results in noticeably different burstiness metrics. 
Figs.~\ref{fig:Wisc-univ1-pt11-burstiness},~\ref{fig:Wisc-univ1-pt11-peak-to-mean} and 
Figs.~\ref{fig:Wisc-univ1-pt11-mod-burstiness},~\ref{fig:Wisc-univ1-pt11-mod-peak-to-mean} 
show the burstiness curves and the peak-to-mean ratio for time intervals up to 1\,s. We  observe that the presence of the rate spike results in a much higher peak-to-mean ratio. 
The maximum backlog plots in Figs.~\ref{fig:Wisc-univ1-pt11-maxBacklog}, ~\ref{fig:Wisc-univ1-pt11-mod-maxBacklog} provide the clearest indication of the presence of microbursts. 
The backlog 
in Fig.~\ref{fig:Wisc-univ1-pt11-maxBacklog} is 
non-negligible even when the link utilization is as low as 5\%.
This indicates that 
even at very low loads, the traffic trace creates a considerable backlog, pointing to the presence of a microburst. Also, for a link utilization below 50\%,  the maximum backlog 
decreases linearly with increasing link rates. This linear decrease suggests 
that the maximum backlog at different link rates occurs for 
burst arrival events, and that the observed decline of the 
maximum backlog is merely due to increasing the available link rate. 
In contrast, without the rate spike 
(Fig.~\ref{fig:Wisc-univ1-pt11-mod-maxBacklog}) 
there is a noticeable backlog only at high 
utilizations.\footnote{For a utilization of 95\%, the maximum backlog of the manipulated trace is larger 
than that of the original trace, even though the original trace has more data. This is due to the removal of the rate spike, which reduces the average  
rate of the trace. As a result, the available link rate for a 95\% utilization with the manipulated trace is less than for the original trace, which 
results in a larger maximum backlog.}

Figs.~\ref{fig:Wisc-univ1-pt11-maxBacklog1-interval}, ~\ref{fig:Wisc-univ1-pt11-mod-maxBacklog1-interval} provide information on the time scale 
at which  the largest backlog is attained. In both graphs, the x-axis 
uses a log scale. For high utilizations, we observe that the large backlog occurs only at a time scale >100\,s. Differently, the maximum backlog for utilizations below 10\% occurs on a much smaller time scale. 

We conclude that  a high peak-to-mean ratio of traffic and a noticeable 
backlog at low utilizations of the available rate  are indications of the potential for microburst events. 
The lower the  utilization at which a non-negligible 
backlog can be observed, the larger the bursts relative to the average traffic.

\begin{figure}[t!]
	\centering
	\includegraphics[width=0.7\textwidth]{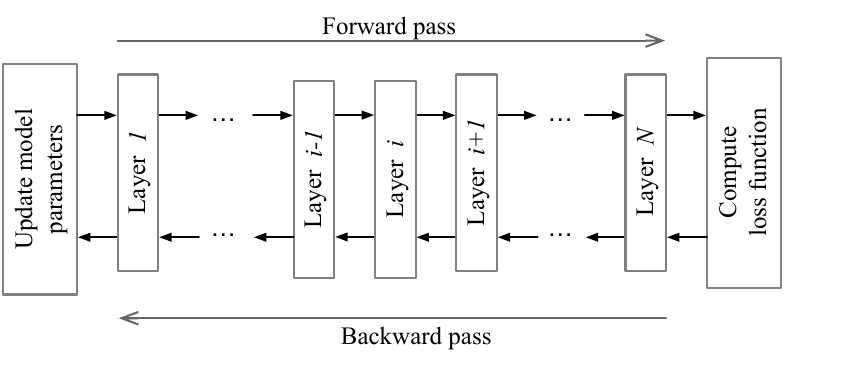}
	\caption{Workflow of DNN model training.}
	\label{fig:DNN-training}
	
	\vspace{-10pt}
\end{figure}

\section{Distributed ML Experiments}
\label{sec:distributedML}

This section describes the setup of the measurement experiments of 
the training of a distributed DNN model and the involved software systems.

\subsection{Distributed Training of DNNs}
\label{sec:distributed-training}

Deep neural networks (DNNs) process training data through a sequence of layers, where the output of one layer is the input to the next layer. DNN training  performs an optimization, often a stochastic gradient descent, in rounds, where  each round proceeds in three steps: (1) a {\it forward pass} through the layers of the DNN to compute a loss (or error) function for a training data set, (2) a backward pass where each layer computes gradients of the loss function, and 
(3) an update the DNN model parameters with the computed gradients. 
The workflow of training is shown in Fig.~\ref{fig:DNN-training}.

To accelerate  the training of DNN models with a large number of parameters or  training data sets, distributed training divides the workload of training across multiple processors, which are referred to as {\it workers}. This is generally done by partitioning the 
training data set and having each worker perform training on one partition (data parallelism). Each worker executes the forward path independently from other workers. On the backward pass, the gradients 
of a layer computed by each worker must be shared with all other workers (in practice, the gradients of the workers are averaged) to obtain the gradients for the complete  training data of the current round. This operation is referred to as {\it Allreduce}. Model parameters are updated after the Allreduce operation is completed. One Allreduce is performed for each layer of the DNN model, and the operations 
are executed in sequence starting at the last layer. 
The network traffic of DNN training is dominated by Allreduce operations on the backward pass, and all other network traffic is negligible in comparison.

We consider two approaches to perform an Allreduce, with and without a server. 
\begin{itemize}
\item {\bf Parameter server/Coordinator: }
Here, workers send their gradients to  a dedicated server, which performs the Allreduce operation. 
The server is called a parameter server if it also updates the parameters 
and sends the results to the workers. 
The server is called a coordinator if it only performs the Allreduce on the gradients without updating the parameters. 
For each layer~$i$, the amount of data transmitted by each worker to the server  and by the server to each worker is given by the 
number of parameters of layer~$i$.

\item {\bf Ring Allreduce: } This is a serverless realization of Allreduce where workers exchange data to neighbors in a logical ring topology \cite{ringallreduce,ringallreduce-baidu}. The parameter vector of each layer is partitioned and equally distributed across all workers. There are two rounds of exchanges, referred to as Reduce-Scatter and Allgather. We refer to Appendix~\ref{sec:primer-ringallreduce} for a review of the Ring Allreduce algorithm.  
The total amount of data transferred during a Ring Allreduce is the same as in a server-based Allreduce. An advantage of Ring Allreduce is 
that multiple workers can transmit concurrently to their neighbors in the logical ring. Also, by dividing the parameters of a layer into smaller chunks, Ring Allreduce reduces the burst sizes of transmissions. 
\end{itemize}

\subsection{Measurement Testbed}
\label{subsec:testbed}

Our measurements are conducted on a testbed network as 
shown in Fig.~\ref{fig:topo-exp}, where four hosts are connected to Huawei CE6866-48s8cq switch with 100~GE interfaces and 64\,MB of shared memory. 
Each host is equipped with an Intel-Xeon-511 2.3\,GHz	CPU with 48~cores, 128\,MB RAM, and a Mellanox	ConnectX-5 NIC. The hosts run Ubuntu~20.04 with Linux kernel~5.15.0-41. Due to the installation of the NIC cards in the hosts only 8 out of 16 lanes of the NIC are connected to the motherboard, which limits the throughput of the NIC cards to 50\,Gbps. 
 At each host,   bidirectional traffic  is captured by an {\it n2disk}~\cite{n2disk} traffic recorder  with nanosecond accuracy. Clocks are synchronized to within a microsecond. 
 
 \begin{figure}[t!]
	\centering
	\includegraphics[width=0.6\textwidth]{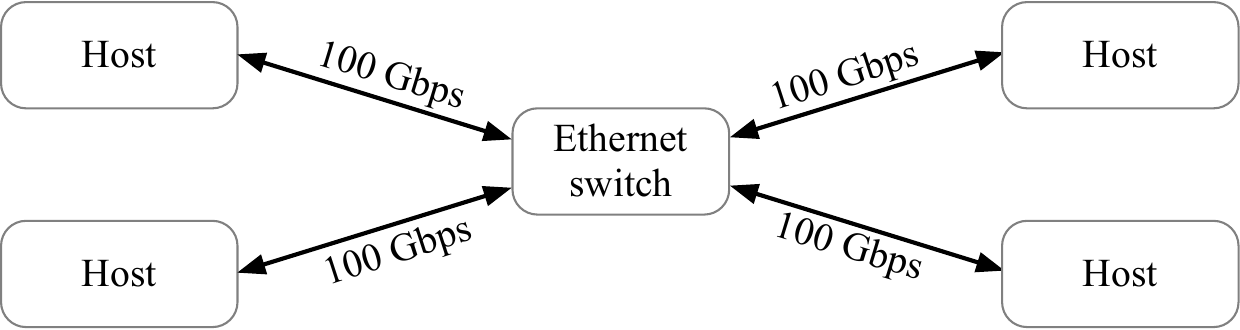}
	\caption{ Testbed network for traffic measurements.}
	\label{fig:topo-exp}
	
	\vspace{-10pt}
\end{figure}
 
The DNN training experiments are performed by TensorFlow (version 2)~\cite{tensorflow} scripts and the open-source Horovod middleware~\cite{horovod}. 
Horovod encapsulates the  Message Passing Interface (MPI) API for parallel computations and the NVIDIA Collective Communications Library (NCCL) for communication between GPUs. 
Since the hosts in the testbed do not use GPUs, communication between workers is based on 
MPI, using the open-source OpenMPI \cite{openmpi} implementation. 
Transport of data between  workers relies on RDMA~\cite{rfc5040} over RoCEv2~\cite{rocev2}. 
In the experiments, the hosts and the switch have both priority flow control~(PFC)~\cite{pfc} and DCQCN congestion control~\cite{DCQCN-Sigcomm2015} enabled. We refer to Appendix~\ref{sec:dcqcn-pfc} for a review of PFC and DCQCN. 

The measurements are done for rounds of training of the ResNet-50 DNN \cite{resnet50} using the ImageNet data set \cite{imagenet}.
ResNet-50 is a model for classification of images with a $224 \times 224$ pixel resolution. 
In a distributed execution, the model requires the transmission of gradients for 54 layers  with a total of  25.5 million parameters, amounting to 102\,MB of data. In each round of training, the model uses a batch size of 32 images.

\section{Server-based Training}
\label{sec:LinearAllreduce}

In this section, we discuss an experiment of distributed ResNet-50 training 
with a server.\footnote{In Horovod, the server acts as a coordinator that  performs an Allreduce operation,  without updating the parameters of a layer.} 
One host acts as the server, and all other hosts are workers, referred to as `Worker1,' `Worker2,' and `Worker3.' We refer to this scenario as {\it Linear-Allreduce}.
The measurements are taken for 10~rounds of training 
over a time period of about 25\,s, during which 
around 6.4 million packets are captured at the server, and 
2.1--2.2 million packets are captured at each of the three workers.

\subsection{Traffic of Linear-Allreduce}
If workers concurrently transmit gradients to the server, there is a risk that the egress port at the switch that leads to the server becomes congested. Since we are interested in 
studying microbursts, we therefore focus on the 
transmission of gradients to the server. 
Over the entire experiment, starting with the first transmissions on the backward pass of the first round until the last transmission on the backward pass of the 10th round, the average traffic rate of each worker is around 340~Mbps. 

Fig.~\ref{fig:linear-allWorkers-10s} presents the transmission rates in Gbps 
for traffic from the workers to the server in a time interval of 10\,s. Each data point presents an average over 1\,ms.  The 10-second interval covers the transmissions over  four rounds of training. We  make the following observations:

\begin{figure}[t!]
\centering 
\includegraphics[width=1\columnwidth]{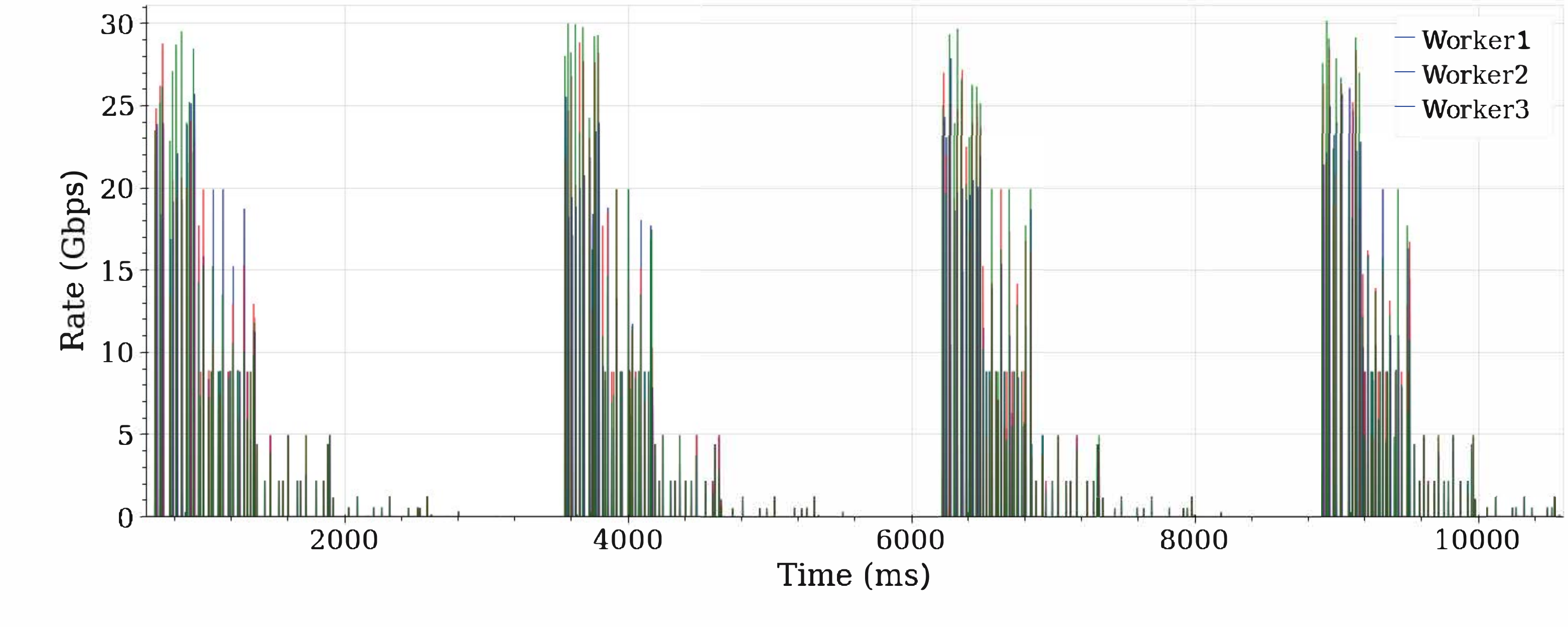}

\caption{Linear-Allreduce: Traffic from all workers to server (10\,s).} 
\label{fig:linear-allWorkers-10s}
\end{figure}

\begin{itemize}
\item 
The figure shows four clusters of traffic, where each cluster represents the transmission of gradients on the backward pass of a round. 

\item Each cluster exhibits the same transmission pattern. Initially, the transmission rates are high and then decrease in an almost stepwise fashion.  The  
values of the transmission rates reflect the number of parameters in a layer. In ResNet-50, the number of parameters of the first layers is only a few 
thousand and grows for subsequent layers, exceeding two million for the last layers. Since the transmissions of gradients occur in the reverse order of the layers, the transmission rates in a round are initially high and then decrease. 

\item The transmission rates never exceed 30\,Gbps, even though the line rate is 100\,Gbps. This is partially due to the throughput limit of 50\,Gbps 
and partially due to the fact that burst sizes with ResNet-50 do not exceed~10~MB.  

\item Before each cluster of transmissions in a round there is an extended time period 
with few or no transmissions. This time period corresponds to the forward pass of DNN training. 
The transmissions in this period are negligible in comparison to the traffic observed during the backward pass. 

\end{itemize}

\subsection{Burstiness Metrics}
We evaluate the burstiness metrics from Sec.~\ref{sec:metrics} for the traffic from the workers to the server. 
We compare the burstiness from a single worker to the burstiness of the aggregate traffic from all workers. The metrics are computed for the entire duration of the experiment. 

\begin{figure}[t!]
\centering 
\subfloat[Peak-to-mean ratio.]{\includegraphics[width=0.32\textwidth]{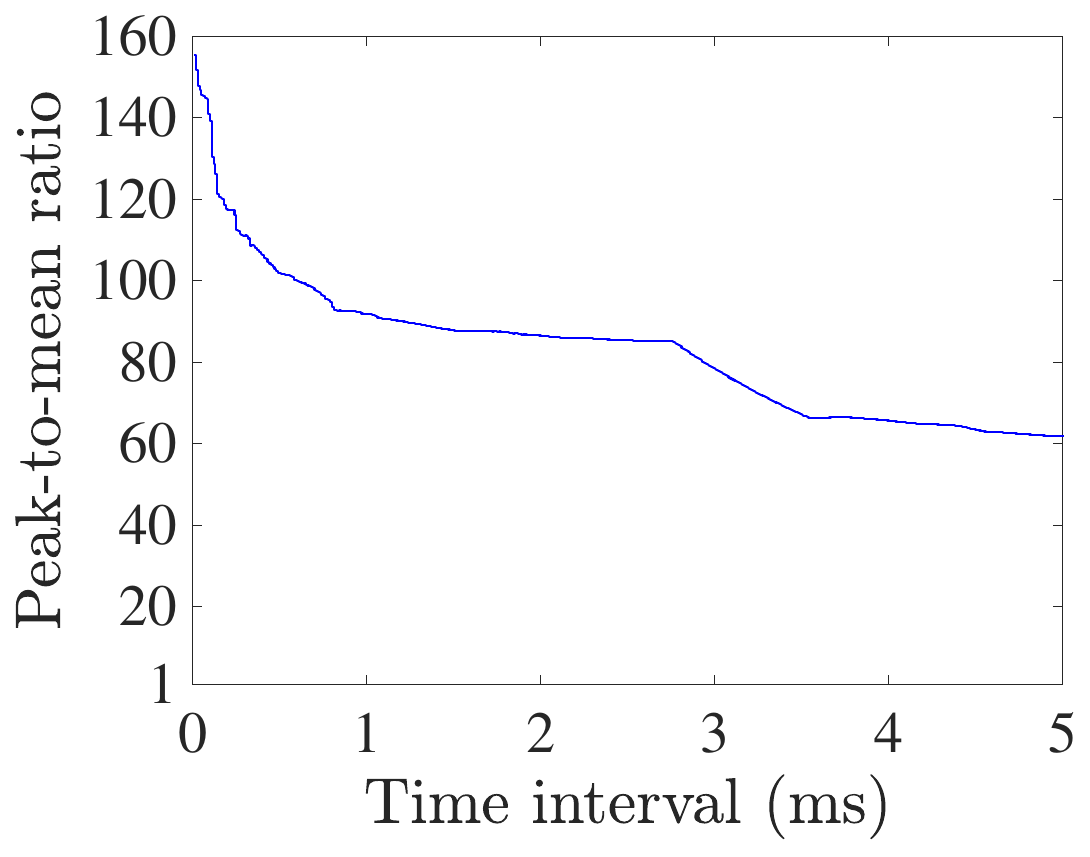}\label{fig:linear-oneworker-peak-to-mean-ratio}}
%
%
\subfloat[Maximum backlog.]{\includegraphics[width=0.32\textwidth]{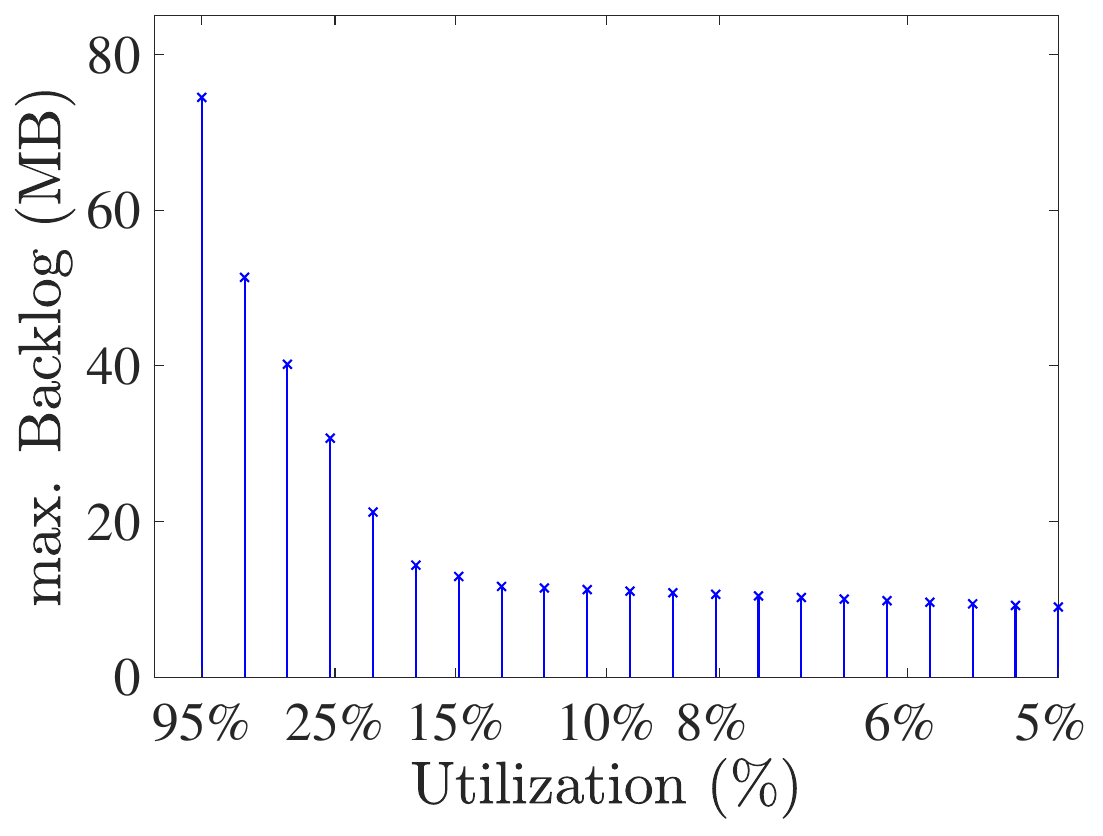}\label{fig:linear-oneworker-maxBacklog}}
\subfloat[Interval max. backlog.]{\includegraphics[width=0.32\textwidth]{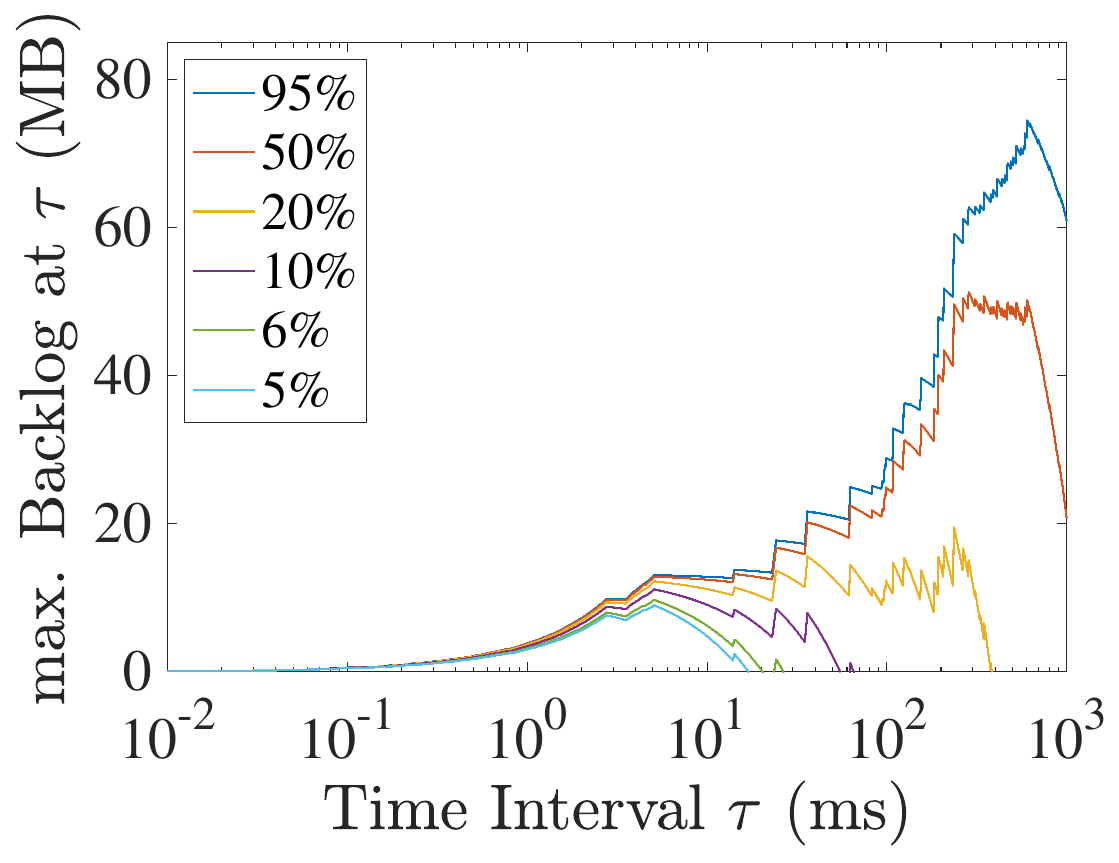}\label{fig:linear-oneworker-maxBacklog1-interval}}

\caption{Linear-Allreduce: Burstiness metrics for traffic \underline{from one worker} (Worker3) to the server.} 
\label{fig:linear-oneworker-burstiness}
\centering 
\subfloat[Peak-to-mean ratio.]{\includegraphics[width=0.32\textwidth]{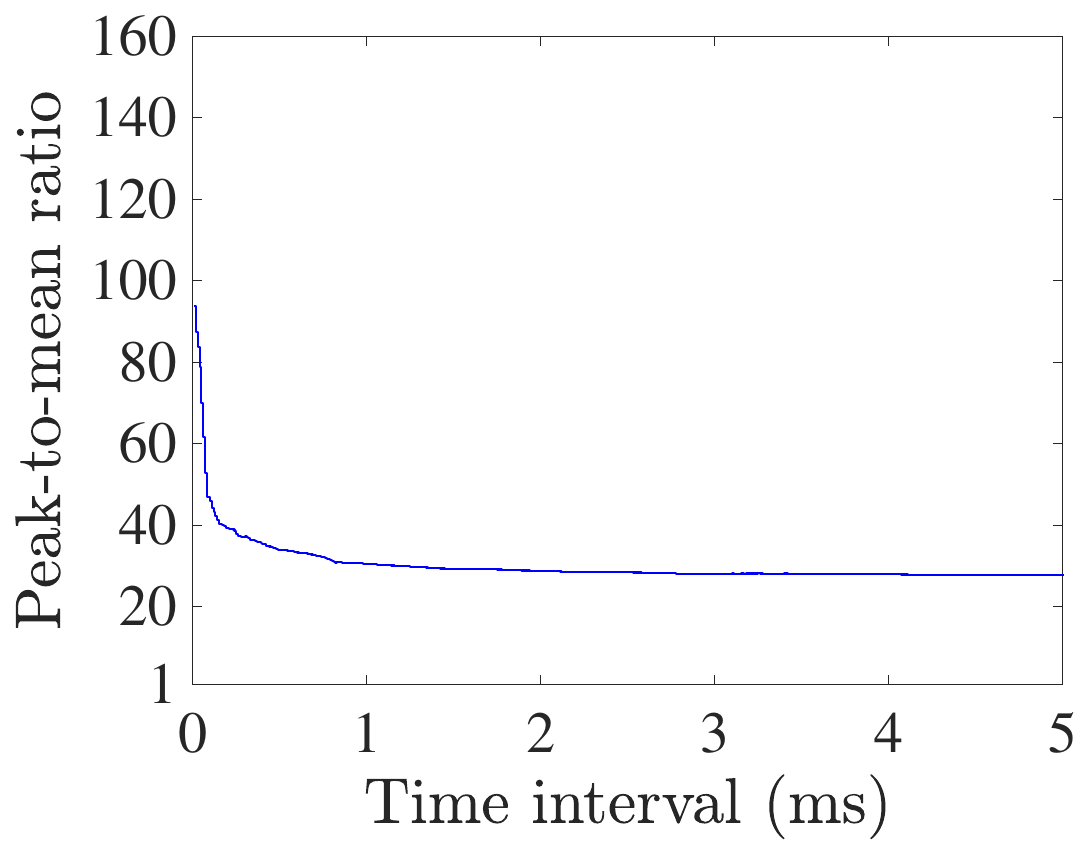}\label{fig:linear-allworkers-peak-to-mean-ratio}}
%
%
\subfloat[Maximum backlog.]{\includegraphics[width=0.32\textwidth]{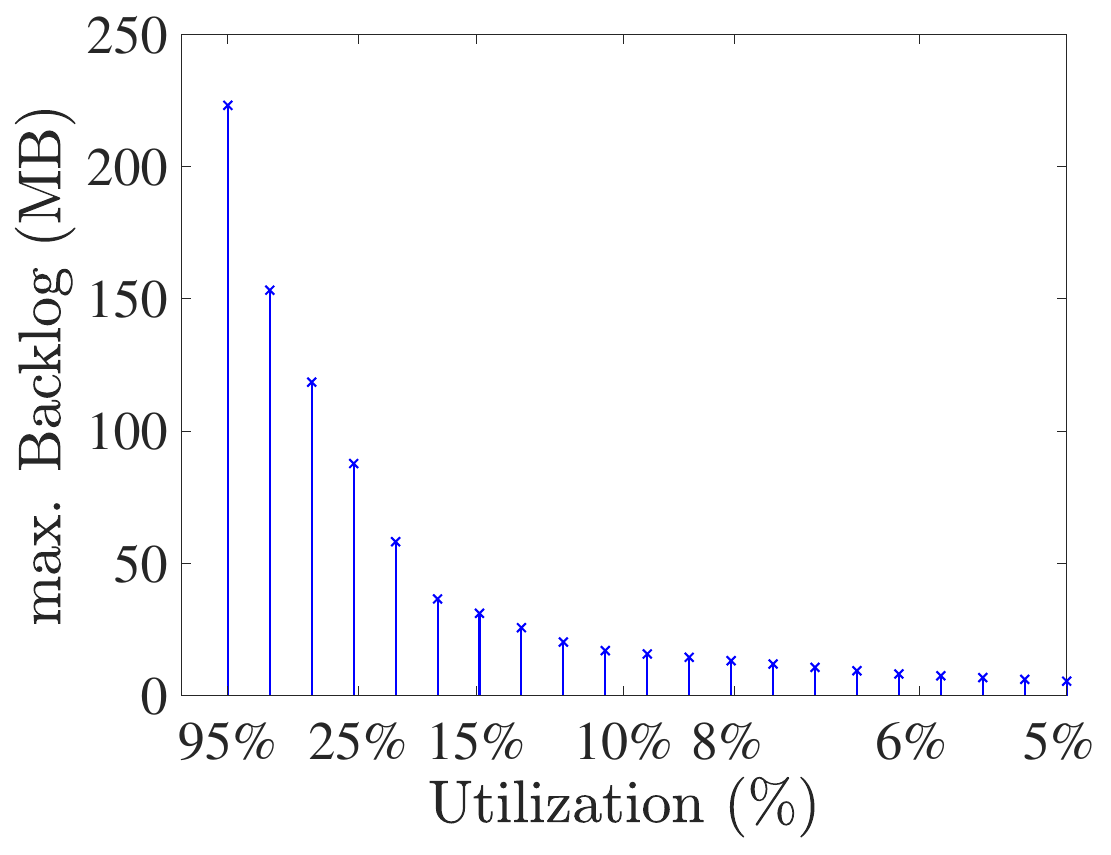}\label{fig:linear-allworkers-maxBacklog}}
\subfloat[Interval max. backlog.]{\includegraphics[width=0.32\textwidth]{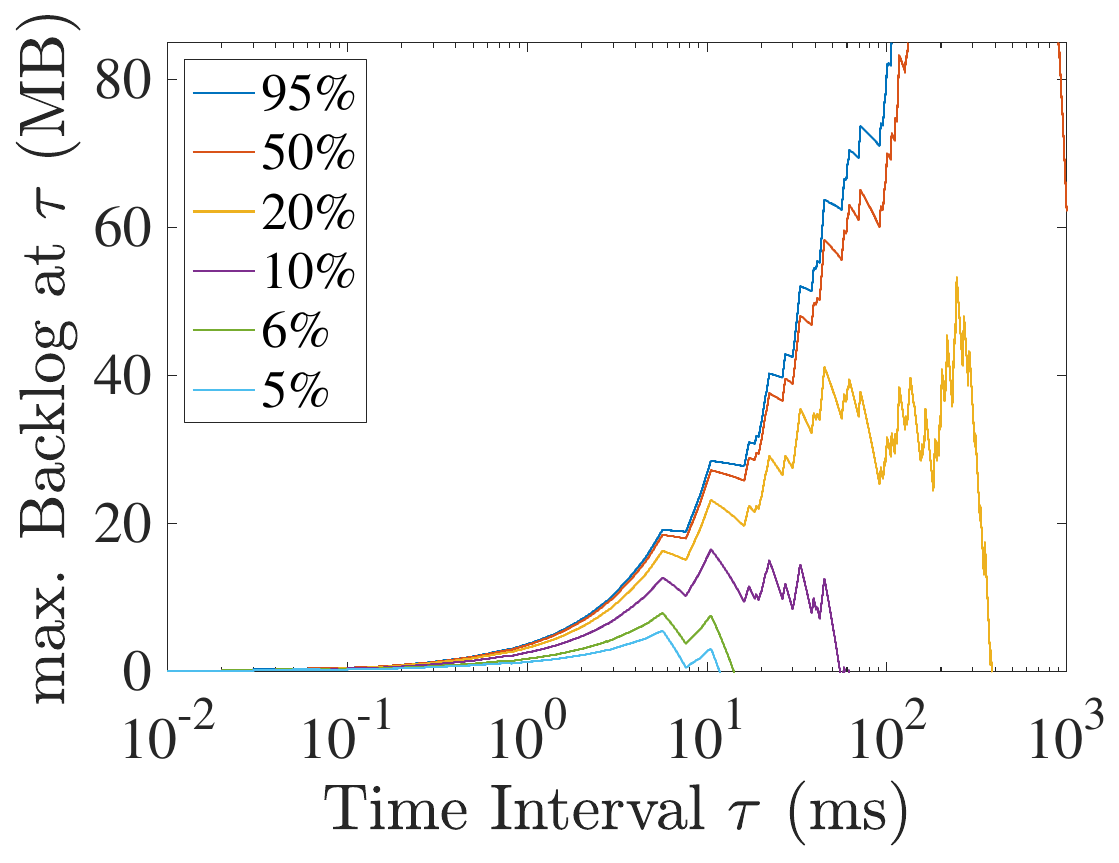}\label{fig:linear-allworkers-maxBacklog1-interval}}

\caption{Linear-Allreduce: Burstiness metrics for traffic \underline{from all workers} to the server.} 
\label{fig:linear-allworkers-burstiness}
\end{figure}
Figs.~\ref{fig:linear-oneworker-burstiness} and~\ref{fig:linear-allworkers-burstiness} evaluate the burstiness metrics peak-to-mean 
ratio, maximum backlog, and interval-specific maximum backlog. Fig.~\ref{fig:linear-oneworker-burstiness} shows the metrics for the traffic from one worker (Worker3), and Fig.~\ref{fig:linear-allworkers-burstiness} those for the aggregate traffic from all workers. The peak-to-mean ratio is depicted for  time intervals up to 5\,ms, which 
covers the range most relevant to the creation of microbursts. 
For one worker, the ratio exceeds 100:1 for time intervals in the sub-millisecond range and remains around 60:1 at 5\,ms.
Observe that the peak-to-mean ratio for the aggregate traffic is noticeably smaller than that of the traffic sent by a single worker.

The maximum backlog function for one worker in Fig.~\ref{fig:linear-oneworker-maxBacklog} shows values of around 10\,MB, even at a utilization of 
only~5\%. 
For the aggregate traffic in  Fig.~\ref{fig:linear-allworkers-maxBacklog}, the maximum backlog  declines rapidly when reducing the  utilization. What is striking is that for a  utilization of 5\%, the maximum backlog for the traffic of a single worker is less than 
that for the aggregate traffic (Recall that the available rates are 
proportional to the average rate of the traffic.)

\begin{figure}[t!]
\centering 
\includegraphics[width=\columnwidth]{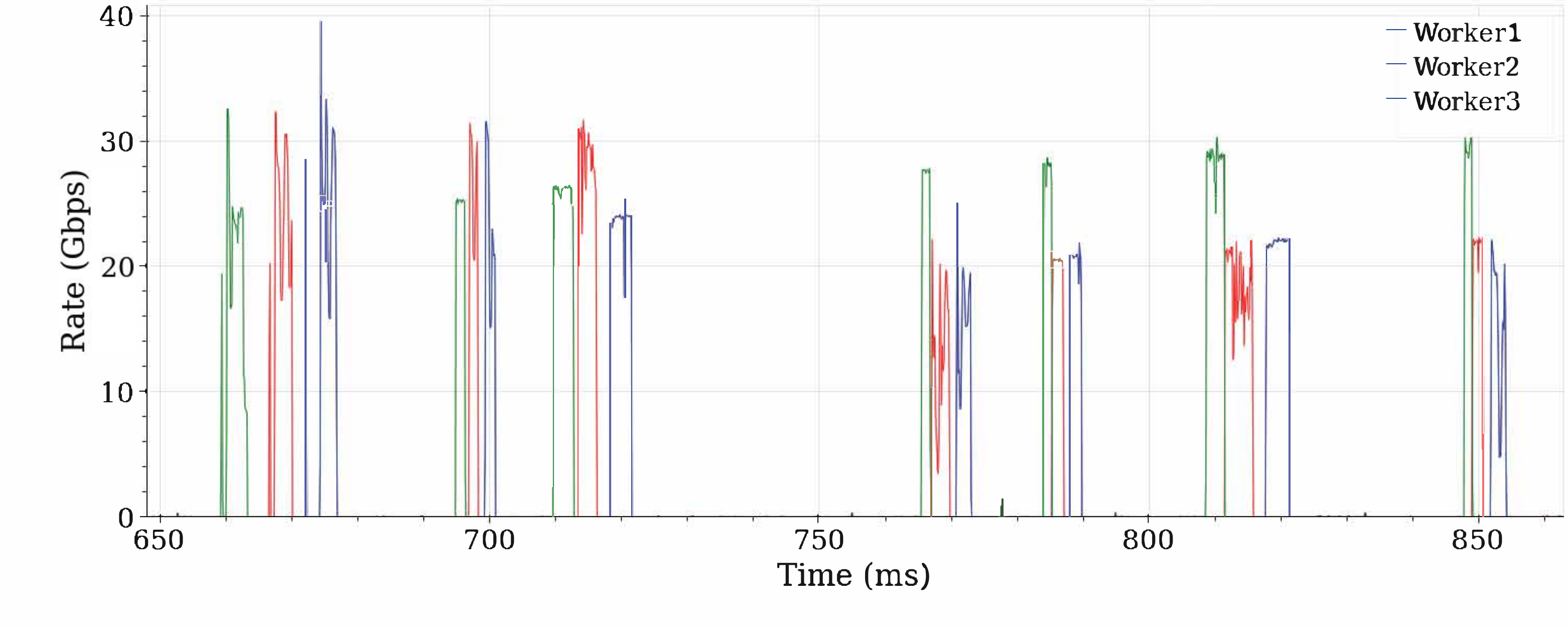}


\caption{Linear-Allreduce: Traffic from workers to  server in an interval of 200\,ms.} 
\label{fig:linear-allWorkers-300ms}

\end{figure}

The fact that the traffic from all workers is less bursty 
than the traffic from one worker may appear counterintuitive, but a 
microscopic look at the transmissions provides an explanation. Fig.~\ref{fig:linear-allWorkers-300ms} presents the transmission rates of the workers in a time interval with length of 200\,ms, 
where rates are averaged over 100\,$\mu$s intervals. 
 The graph contains the transmissions of gradients from seven layers.  
Observe that no two workers transmit to the server at the same time. Moreover, the order in which workers transmit data to the server is the same  for all transmissions. This results from the implementation of MPI primitives. With Horovod/MPI, the transmissions by workers to the  server are realized by {\sc MPI\_Reduce}, which  has workers 
transmit to the server in a fixed sequence.


\section{Serverless  Training}
\label{sec:RingAllreduce}
In this section, we present traffic measurements of distributed training of ResNet-50  without a coordinator. Here, the reduction of gradients is 
performed by Ring-Allreduce. Measurements are taken on 
the testbed in Fig.~\ref{fig:topo-exp}. Each host executes a worker 
for the training, denoted as Worker$i$ with $i=1,2,3,4$, with a logical 
ring 
\[ 
{\rm Worker1}\rightarrow {\rm Worker2} \rightarrow {\rm Worker3} \rightarrow {\rm Worker4} \rightarrow {\rm Worker1}\,. 
\]
Almost all traffic generated during the experiment is sent along the logical ring, and we ignore all other traffic. 
The experiment measures the traffic created by 20 rounds of 
training. 
Over the  duration of the experiment, the average rate of captured 
traffic sent by each worker  in the logical ring is around 410~Mbps. 

\medskip
\noindent
{\bf Remark:} In this experiment, the capture tool  sometimes fails to capture packets when  both incoming and outgoing traffic simultaneously 
have a high data rate.  The fraction of missing packets in these time periods is around 2--5\%, but has not been precisely quantified.

\begin{figure}[t!]
\centering 
\includegraphics[width=\columnwidth]{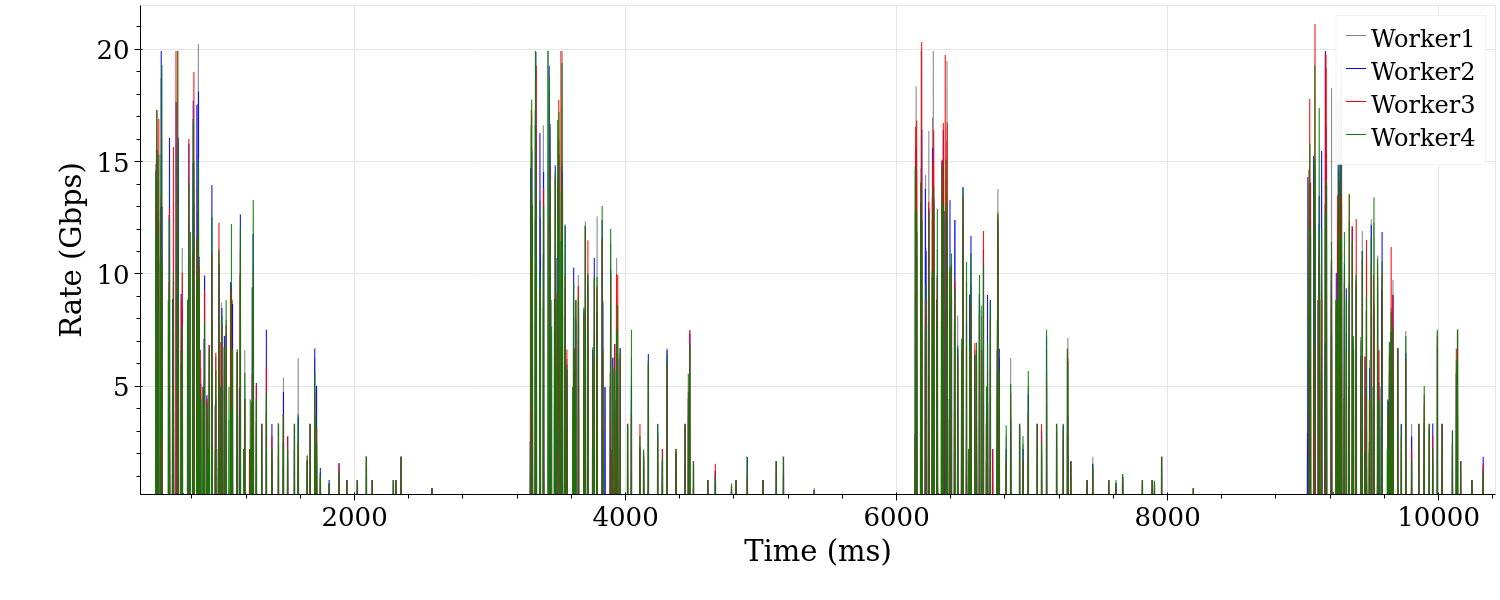}

\caption{Ring-Allreduce: Traffic between workers in logical ring (10\,s).} 
\label{fig:ring-allWorkers-10s}
\end{figure}

\subsection{Traffic of Ring-Allreduce}

Fig.~\ref{fig:ring-allWorkers-10s} shows the transmission rates from the 
workers along the logical ring 
over a time period of 10\,s, where  rates are averaged over 1\,ms. The  depicted data are transmissions in the logical ring for four rounds of training. The traffic from each worker is shown in a different  color. We summarize our observations as follows:

\begin{figure}[t!]

\centering 
\includegraphics[width=\columnwidth]{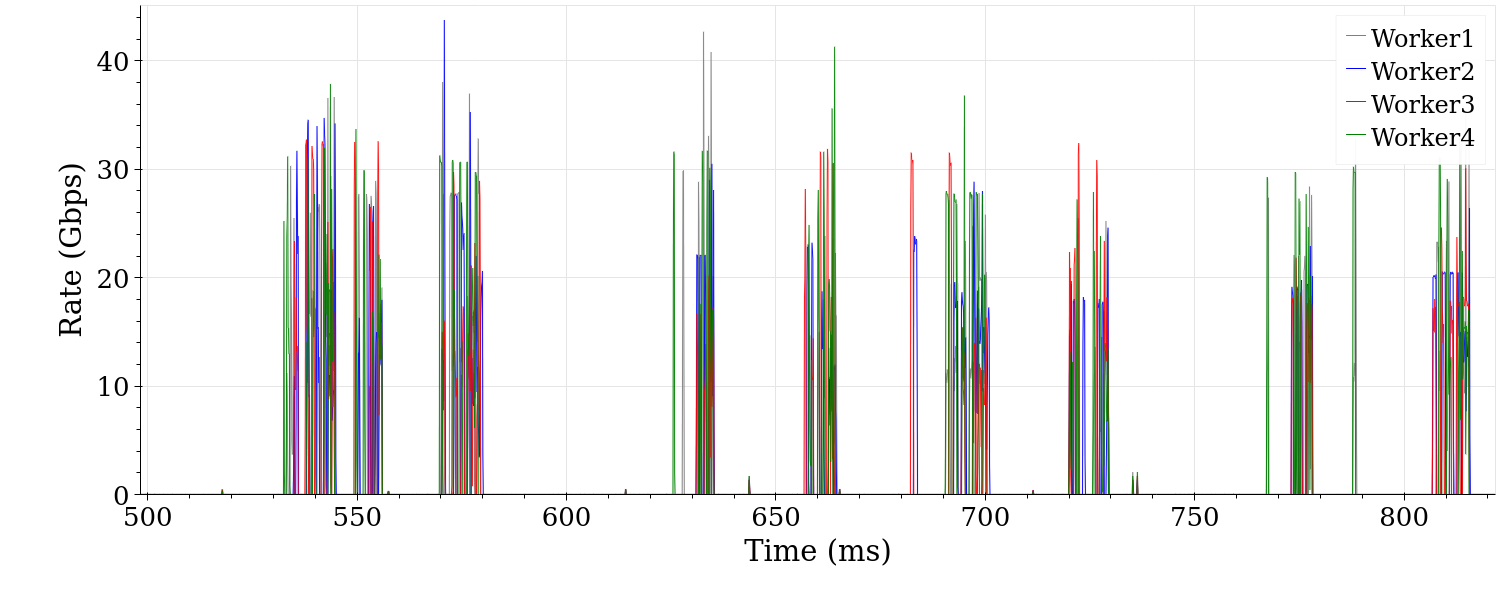}

\caption{Ring-Allreduce: Traffic between workers in logical ring (300\,ms).} 
\label{fig:ring-allWorkers-300ms}
\centering 
\includegraphics[width=\columnwidth]{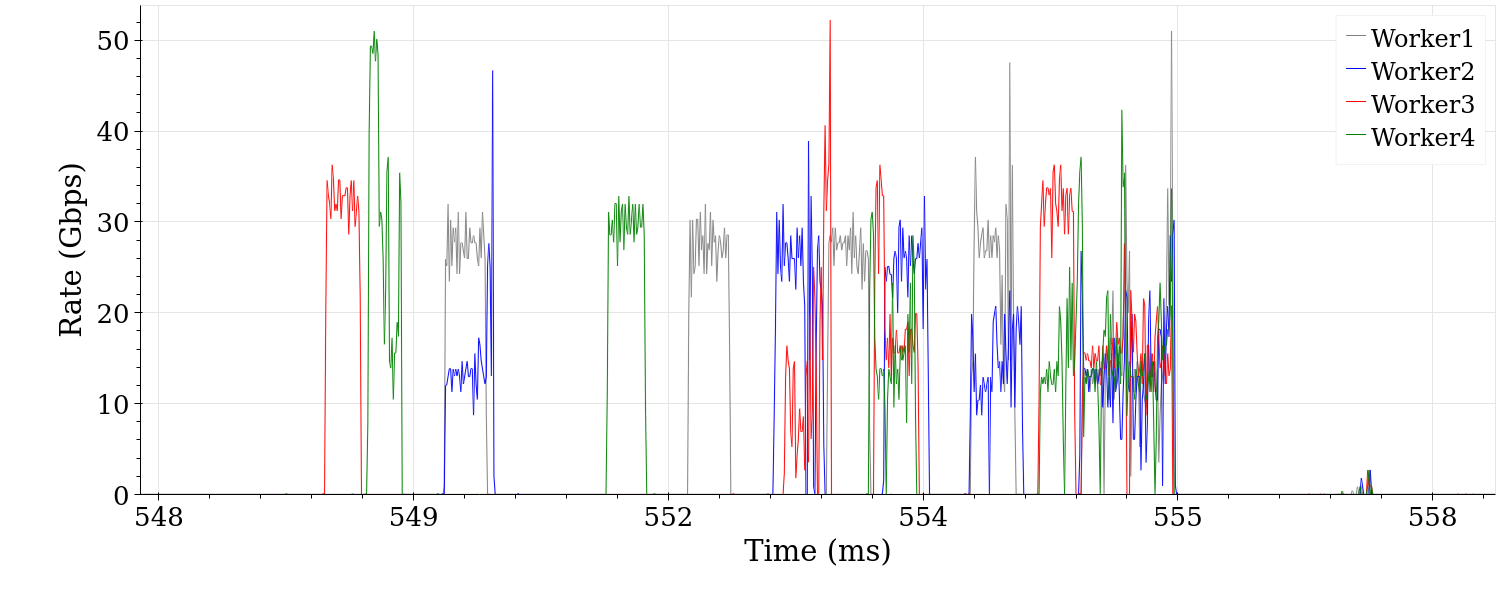}

\caption{Ring-Allreduce: Traffic between workers in  logical ring (10\,ms).} 
\label{fig:ring-allWorkers-10ms}

\end{figure}
\begin{itemize}
\item From the transmissions, we can infer that the time to complete a round is similar to the server-based DNN training in Sec.~\ref{sec:LinearAllreduce}. 
\item Over the depicted 10\,s, the pattern of transmissions looks 
similar to the Linear-Allreduce experiment in Fig.~\ref{fig:linear-allWorkers-10s}. Time periods of burst transmissions 
alternate with time periods with no or few transmissions. The former captures the transmission of gradients following the Ring Allreduce algorithm. The latter reflects  computations  on the forward path that do not require 
an exchange of data. 

\item 
In each time period with burst transmissions, the initial higher transmission rates reflect that the size of the transmitted gradients is larger for layers further down in the DNN. 

\item The  transmission rates, if averaged over a millisecond, rarely exceed 20\,Gbps for any of the workers, and appear to be overall lower 
than the rates seen in Fig.~\ref{fig:linear-allWorkers-10s}. This 
is due to the Ring Allreduce operation, which does not transmit the 
gradient vector in a single batch, but in smaller chunks (see Appendix~\ref{sec:primer-ringallreduce}). 
\end{itemize}

Fig.~\ref{fig:ring-allWorkers-300ms} shows the transmission rates in a time interval of 300\,ms, where  rates are averaged over 100\,$\mu$s. 
We can clearly distinguish time periods with high-rate transmissions,  
that correspond to gradient transmissions of ResNet-50 layers. 
However, the figure does not give insight into details of the 
Ring Allreduce operation. 
In Fig.~\ref{fig:ring-allWorkers-10ms}, we zoom in to a subinterval 
of 10\,ms, where each data point is an average over 10\,$\mu$s. 
The figure captures the transmissions of a single Ring Allreduce for one layer of the ResNet-50 DNN. 
At this level of detail, we are able to distinguish the transmissions from different workers:
\begin{itemize}

\item Overall, there are six bursts of transmissions for each worker. 
We can relate the first 
three bursts to the data exchange during the Reduce-Scatter phase of 
Ring Allreduce
(see Appendix~\ref{sec:primer-ringallreduce}, Fig.~\ref{fig:ReduceScatter})
and the second group of three bursts results from the 
transmissions during the Allgather phase (Appendix~\ref{sec:primer-ringallreduce}, Fig.~\ref{fig:Allgather}).

\item The first set of transmissions, around $t=549$\,ms, shows 
the first round of the Reduce-Scatter. 
We recognize four burst transmissions, one for each worker.
The second round of the Reduce-Scatter starts shortly before $t=552$\,ms. Note that the order of transmissions by the workers is different than in the first round of the Reduce-Scatter. Since there is no central coordination, the order of transmissions by the workers depends only on the 
local computation times and the times when the previous batch of data was last received from the counterclockwise neighbor in the logical ring. 
For the last round of the Reduce-Scatter and all rounds of the Allgather, the transmissions by the workers overlap in time while maintaining the 
dependencies of transmissions.

\item Note that the transmission rates in Fig.~\ref{fig:ring-allWorkers-10ms} are higher than the rates for the same time interval 
in  Fig.~\ref{fig:ring-allWorkers-10s}. This is due to the different averaging intervals used in the figures. 

\end{itemize}

\subsection{Burstiness Metrics}
\label{subsec:burstmetric-ring}

We now evaluate the burstiness metrics from Sec.~\ref{subsec:burst-metrics-def} for the traffic of 
one worker (Worker3) and for the aggregate traffic from all workers. 
We only consider traffic that is sent along the logical ring. The metrics are computed using the captured traffic from the entire experiment. 
 
\smallskip
\noindent 
{\bf Note:} In Sec.~\ref{sec:LinearAllreduce}, when we considered the aggregate traffic from all workers to the server, this traffic arrives at the switch at different ports, but leaves the switch at the same egress port. Differently, in Ring-Allreduce, gradients are sent along a logical ring,  and the aggregate traffic therefore departs the switch at different egress ports. Hence, even if all workers simultaneously 
transmit large bursts to their clockwise neighbor in the logical ring, 
the load at each egress port does not exceed its line rate. As a result, 
the backlog that accumulates at each output port is negligible.  
The graphs of the maximum backlog function 
for the aggregate traffic reflect a scenario where the  aggregate traffic  departs a switch on the same egress port.

\begin{figure}[t!]
\centering 
\subfloat[Peak-to-mean ratio.]{\includegraphics[width=0.32\textwidth]{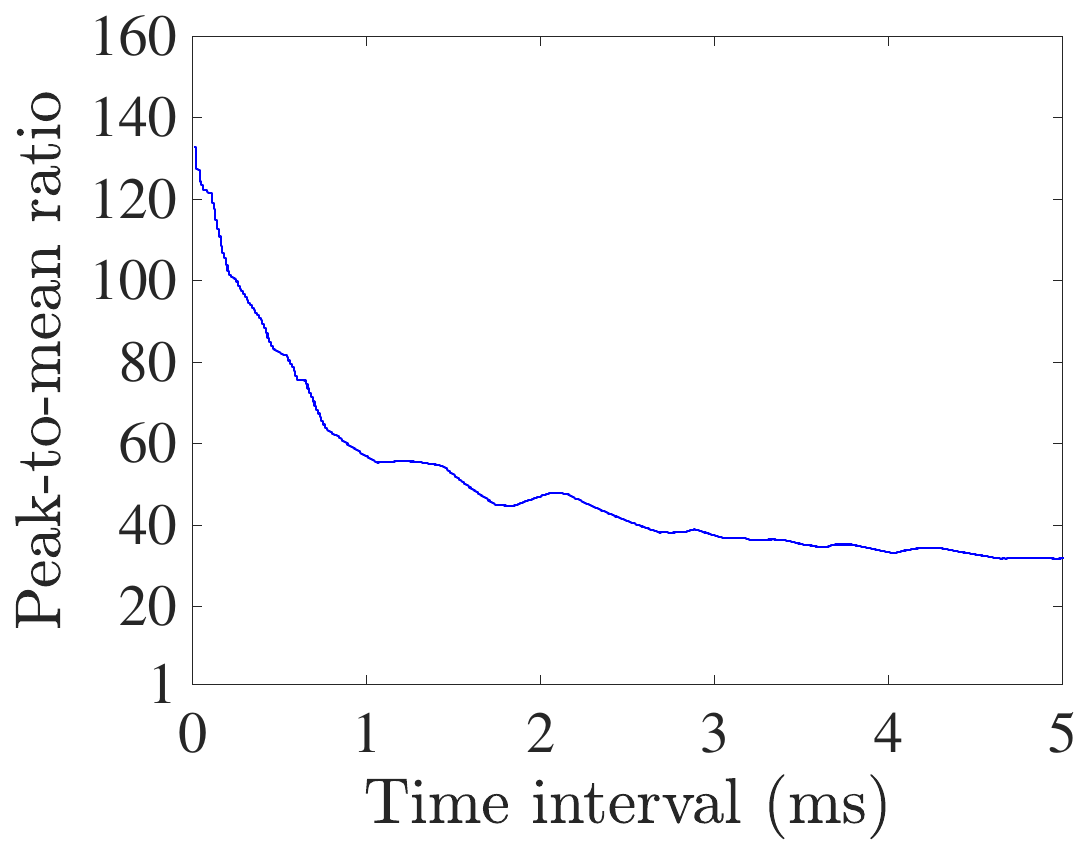}\label{fig:ring-oneworker-peak-to-mean}}
%
%
\subfloat[Maximum backlog.]{\includegraphics[width=0.32\textwidth]{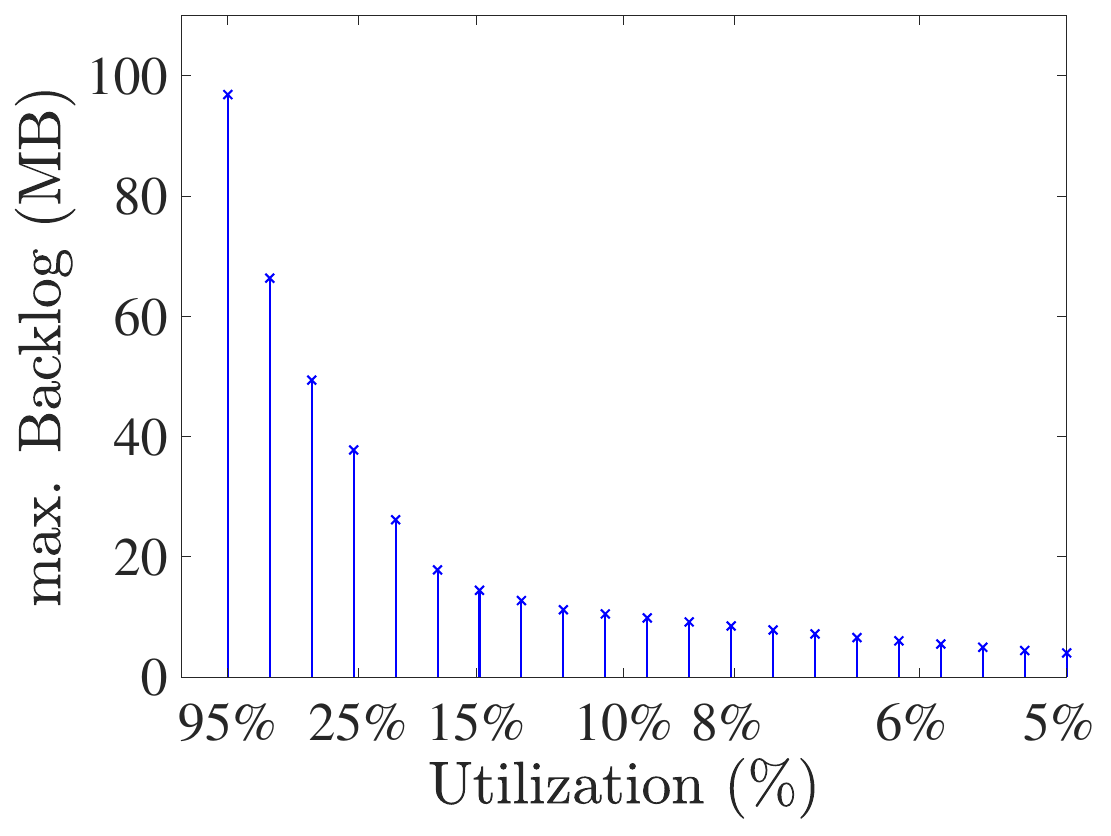}\label{fig:ring-oneworker-maxBacklog}}
\subfloat[Interval max. backlog.]{\includegraphics[width=0.32\textwidth]{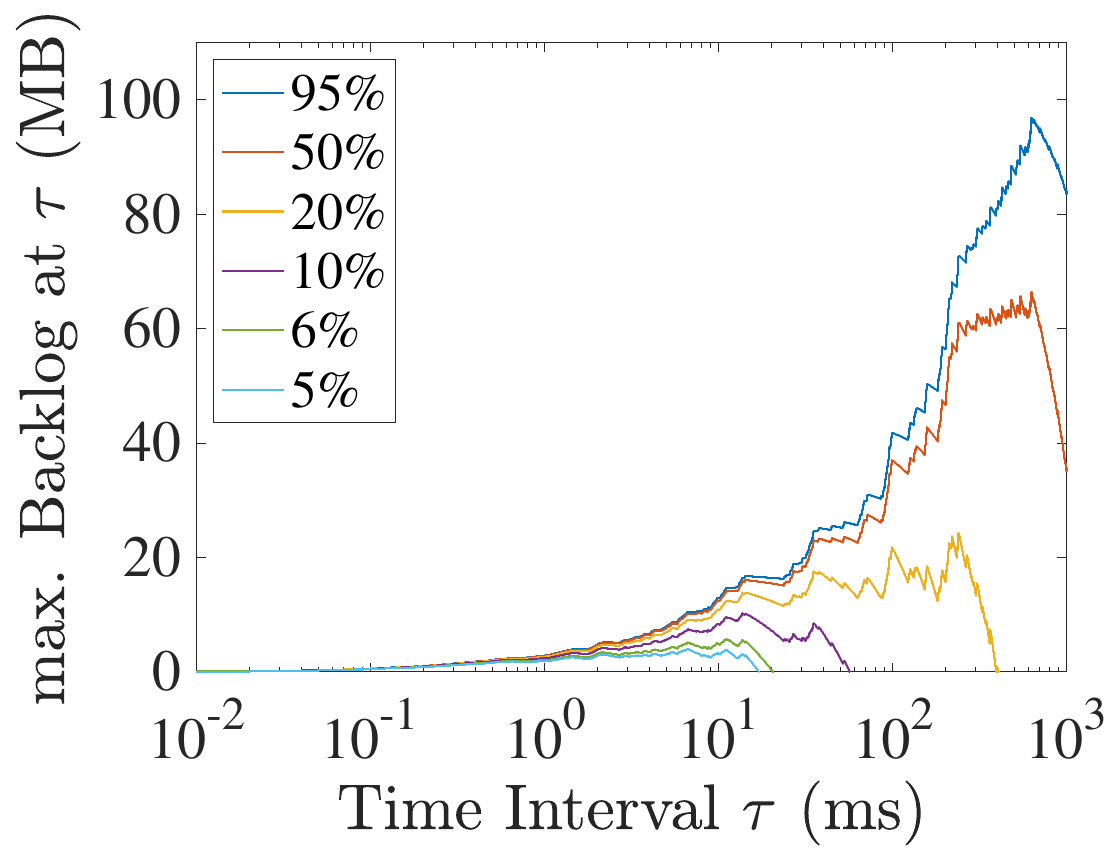}\label{fig:ring-oneworker-maxBacklog1-interval}}

\caption{Ring-Allreduce: Burstiness metrics for traffic \underline{from one worker} (Worker3).} 
\label{fig:ring-oneworker-burstiness}
\centering 
\subfloat[Peak-to-mean ratio.]{\includegraphics[width=0.32\textwidth]{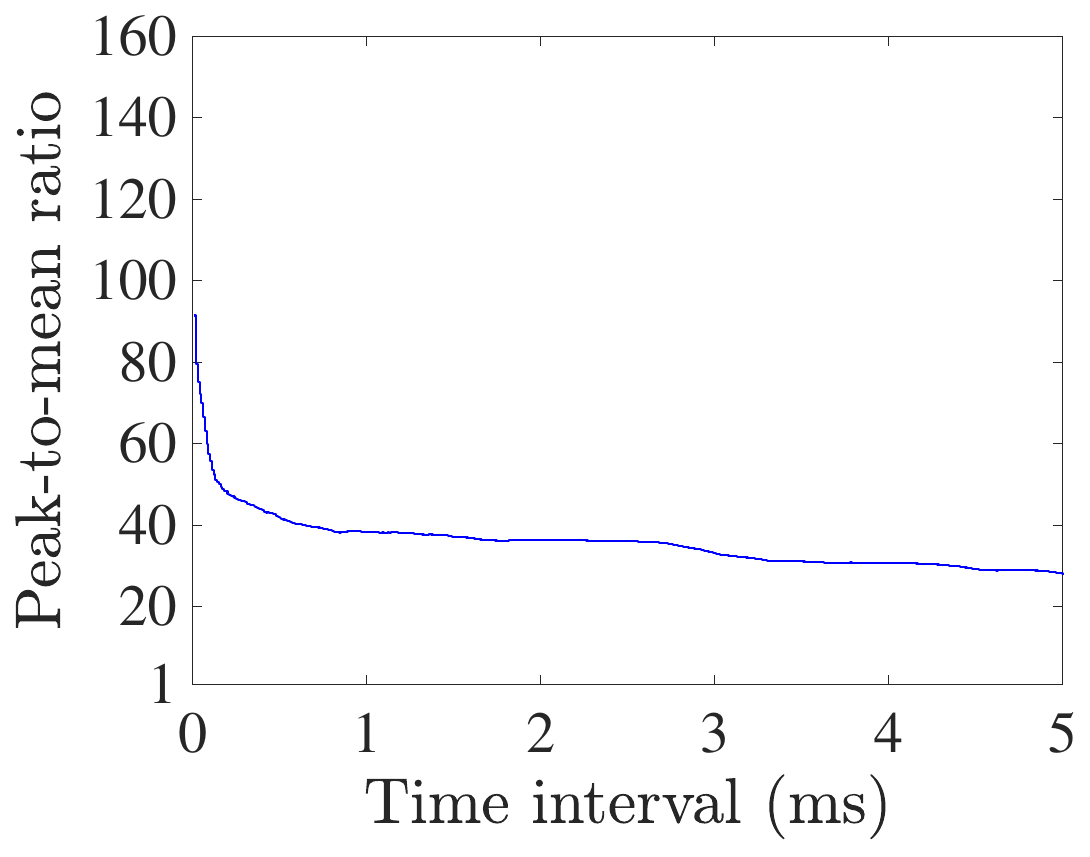}\label{fig:ring-allworkers-peak-to-mean}}
%
%
\subfloat[Maximum backlog.]{\includegraphics[width=0.32\textwidth]{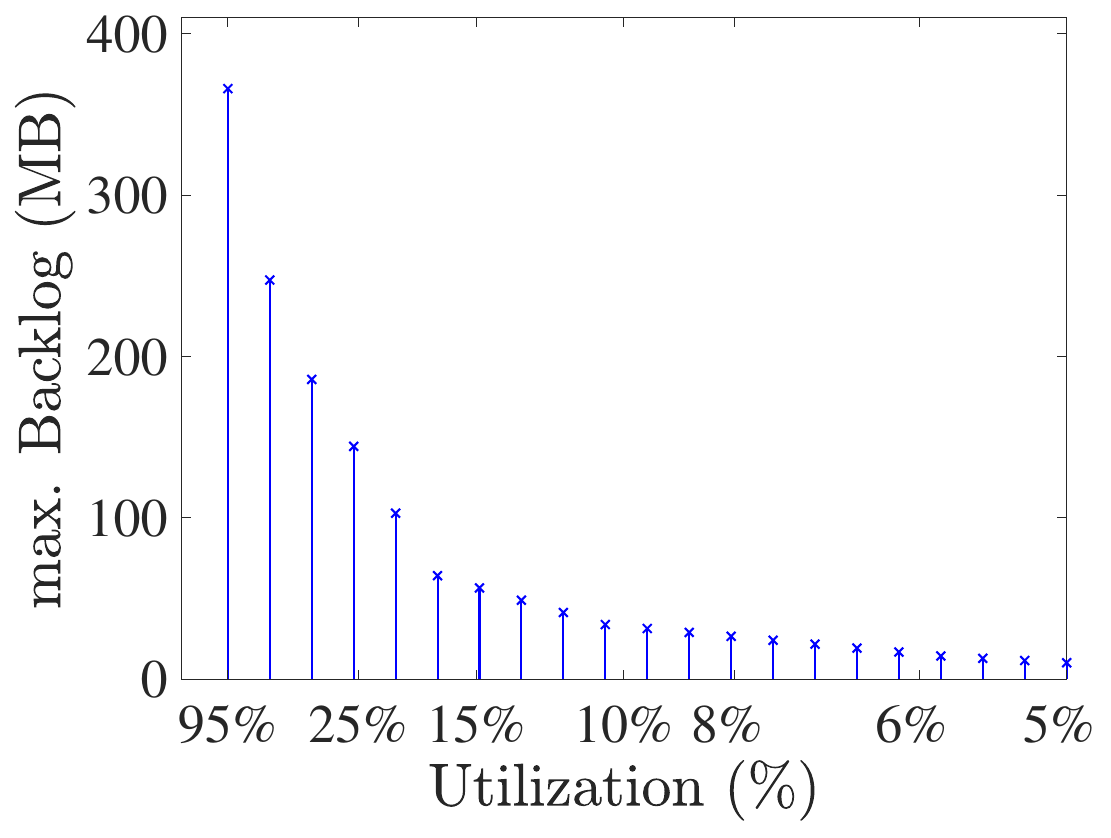}\label{fig:ring-allworkers-maxBacklog-28}}
\subfloat[Interval max. backlog.]{\includegraphics[width=0.32\textwidth]{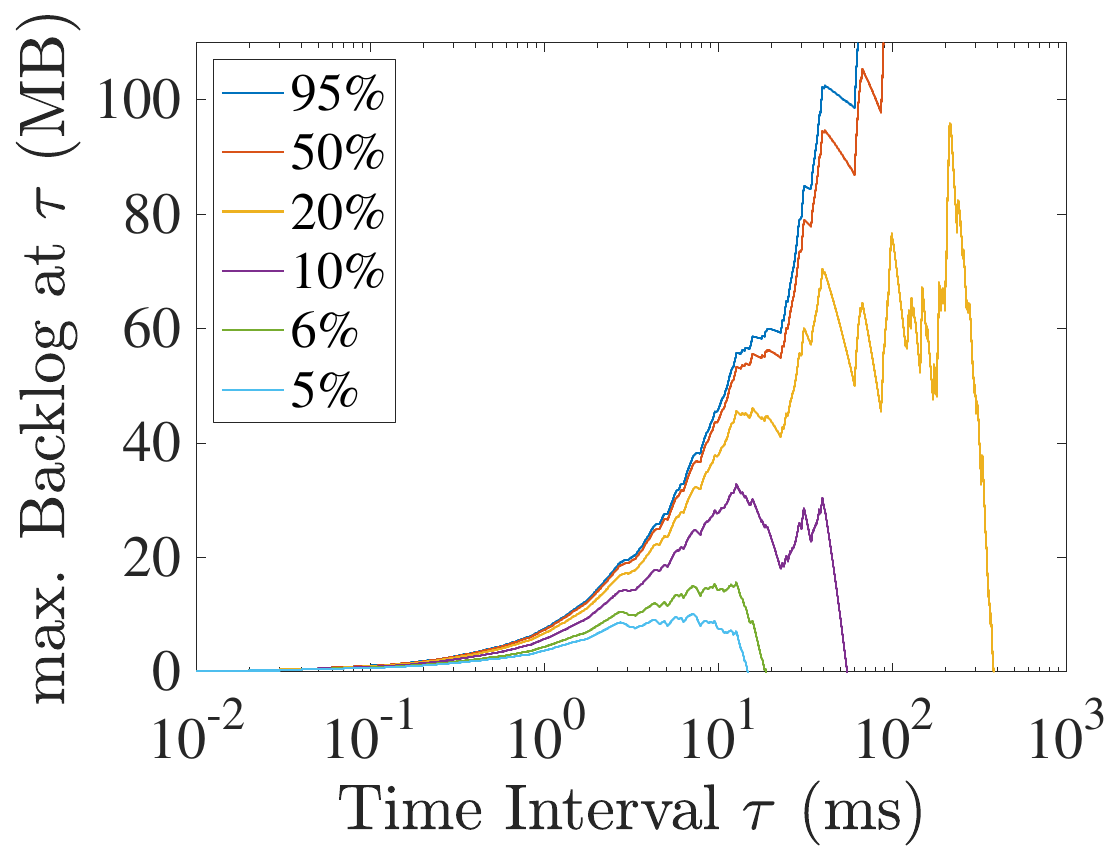}\label{fig:ring-allworkers-maxBacklog1-interval}}

\caption{Ring-Allreduce: Burstiness metrics for traffic \underline{from all workers}.} 
\label{fig:ring-allworkers-burstiness}
\end{figure}

\bigskip
Figs.~\ref{fig:ring-oneworker-burstiness}  and~\ref{fig:ring-allworkers-burstiness} compare the burstiness metrics 
of the traffic from one worker to those of the aggregate traffic from all workers. Different from Linear-Allreduce, the peak-to-mean ratios for intervals >1~ms for one worker and all workers are similar. 
This also holds for the maximum backlog functions in Figs.~\ref{fig:ring-oneworker-maxBacklog} and~\ref{fig:ring-allworkers-maxBacklog-28}. 
For both a single worker and all workers, there is a linear decay of the maximum backlog, which we interpret as being associated with microbursts. 
A comparison of Figs.~\ref{fig:ring-oneworker-maxBacklog1-interval} and~\ref{fig:ring-allworkers-maxBacklog1-interval} shows that the 
maximum backlog at a low utilization is larger for the aggregate traffic than for a single worker. Recall that the opposite was the case in the Linear-Allreduce experiment. 
The peak-to-mean ratio for Worker3 is initially above 100 and decreases to below 40 for time intervals longer than 3\,ms. If we compare this to the traffic of the same worker in  the Linear-Allreduce experiment (Fig.~\ref{fig:linear-oneworker-burstiness}), we see that the peak-to-mean ratio with Ring-Allreduce is lower. This is a consequence of the Ring Allreduce algorithm 
(see Appendix~\ref{sec:primer-ringallreduce}, Figs.~\ref{fig:ReduceScatter} and~\ref{fig:Allgather}) as burst sizes sent by one worker in a Ring Allreduce with a total of four workers are $1/4$-th of the burst sizes sent to the server in Linear-Allreduce. The maximum backlog function for Worker3 in Fig.~\ref{fig:ring-oneworker-maxBacklog} shows that even at a link utilization of 5\%, the backlog can be several MB. A comparison with the backlog function for Linear-Allreduce (in Fig.~\ref{fig:linear-oneworker-maxBacklog}) yields that the maximum backlog at a low utilization is smaller with Ring-Allreduce. The reason, again,  is the smaller burst sizes found in Ring-Allreduce. 

Overall, with Ring-Allreduce, the  aggregate traffic from all workers has similar 
burstiness characteristics as the traffic from a single worker. This is notably different from the Linear-Allreduce experiment, where due to 
the coordination of transmissions by workers, the 
aggregate traffic from all workers is less 
bursty than the traffic from a single worker. Since such a coordination is 
absent in Ring-Allreduce, 
the burstiness metrics do not decline when 
aggregating the traffic from multiple workers. 
As noted earlier, in our measurement experiment, the aggregate traffic in the Ring-Allreduce experiment is split across different egress ports. On the other hand, if the same traffic is transmitted in a DCN with a leaf-spine topology, traffic from multiple workers may reach the same egress  port of a switch.  Here, the risk of microbursts largely depends on the placement of worker nodes in the DCN topology.  

\section{Burstiness Potential of Distributed ML  Traffic}
\label{sec:unorchestrated-ML}

Fan-in has been identified as a significant risk for the creation of microbursts~\cite{microburst2-1}. 
It generally refers to the multiplexing of concurrent flows belonging to the 
same application that simultaneously transmit  
data to the same target address.   Concurrent transmissions from multiple 
sources to the same target may create a bottleneck at switches located 
close to the target,
which may experience large burst arrivals destined to the same output 
port. 
Concurrent traffic flows to the same target are common in many DCN  applications, such as 
MapReduce~\cite{mapreduce}, distributed storage~\cite{distr-storage}, 
Memcached~\cite{memcached}, and distributed machine 
learning~\cite{tensorflow,pytorch}. However, as seen in the traffic traces of distributed training of the ResNet-50 model in the previous sections, this does not 
necessarily result in a large backlog at switches. 

\begin{itemize}
\item In the Linear-Allreduce experiment (Sec.~\ref{sec:LinearAllreduce}), 
in each layer of the backpropagation algorithm, all workers send sometimes large bursts of data to the same destination. These bursts appear to be concurrent,  however, inspections of the traffic at a small time scale and a study of the underlying software reveals that, at any time, only one worker transmits traffic to the server. 

\item In Sec.~\ref{sec:RingAllreduce}, the design of the Ring Allreduce algorithm  ensures that fan-in does not occur, at least not in a single switch topology. 
\end{itemize}
We therefore step away from the consideration of microbursts created by a single application and consider scenarios where traffic from multiple  concurrently running applications have traffic that is destined to the same egress port of a network switch. We refer to such scenarios as cross-application fan-in.  

We next use the traffic traces from the previous sections to 
get a sense how such a cross-application fan-in may look like 
for DNN traffic. For this we, explore the 
worst-case burstiness of the  captured traffic from the experiments in 
Secs.~\ref{sec:LinearAllreduce} and~\ref{sec:RingAllreduce}. 
Traffic from multiple traffic flows has maximal burstiness if it 
aligns in the worst possible fashion. Within a single application, 
both Linear-Allreduce and Ring-Allreduce prevent that traffic aligns 
in a worst-case fashion at one egress port. The considered worst-case considerations therefore apply to scenarios where each traffic flow arrives from a different application. 
\begin{figure}[t]
\centering 
\subfloat[Arrivals are not aligned.]{\includegraphics[width=0.4\textwidth]{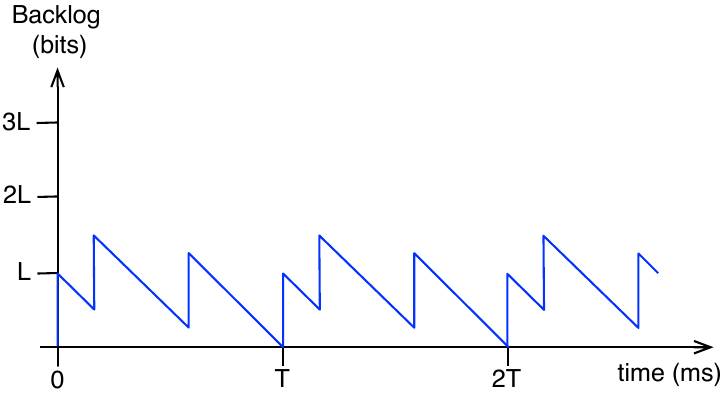}
\label{chp5-fig:fig5-arrival-scenario2}}
\subfloat[Arrivals are aligned in a worst-case fashion.]
{\includegraphics[width=0.4\textwidth]{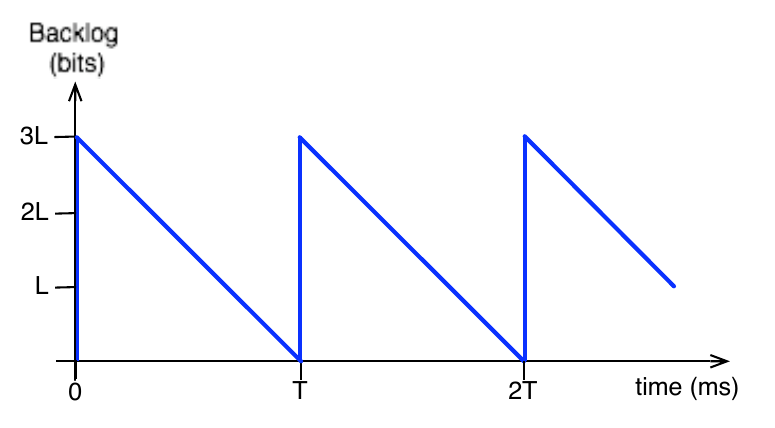}
\label{chp5-fig:fig5-arrival-scenario1}}

\caption{Backlog for 3 flows
with periodic arrivals.} 
\label{chp5-fig:fig5-arrival-scenarios}
\end{figure}

\smallskip\noindent
{\bf Example:} 
Consider a set of three periodic flows that each issue a burst of size $L$\,bits every $T$\,ms to a switch port with line rate $C$\,kbps. 
If the packet arrivals of the flows are not aligned, the backlog at the switch port may evolve as shown in Fig.~\ref{chp5-fig:fig5-arrival-scenario2}. Backlog decreases at a constant rate  $C$, and each  packet arrival increases the backlog by $L$.  Note that the backlog never exceeds  $2L$. Fig.~\ref{chp5-fig:fig5-arrival-scenario1} shows the backlog when the arrivals from the three flows are aligned. This scenario creates the worst-case burstiness and results in the worst-case backlog of $3L$.

\smallskip
To generalize the example, consider the aggregation of a set of traffic flows with arrival functions $A_j$ and burstiness curves $\E_j$ ($j=1,2, \ldots, N$). The arrival function of the aggregate, 
denoted by $A$, is given by 
\[
A (t) = \sum_{j=1}^{N} A_j(t) \, . 
\]
The burstiness curve of the aggregate traffic, denoted by 
$\E_A$,  is computed as   
\[
\E_A (t) = \Bigl(\sum_j A_j \Bigr) \deconv \Bigl(\sum_j A_j \Bigr) (t) \, . 
\]
An upper bound for $\E_A$ is the sum of the burstiness curves 
of 
all flows, which is given by 
\[
 \sum_j \E_{A_j} (t) = \sum_j \Bigl( A_j  \deconv  A_j \Bigr) (t) \, . 
\]
The sum $\sum_j \E_{A_j}$ therefore provides the worst-case burstiness of the aggregate traffic. If $\E_A \approx  \sum_j \E_{A_j}$  we can conclude that traffic is close to a worst-case alignment. In the example above, where each flow $j$ has the same  burstiness curve of  
$\E_{A_j} (t) =  L \lceil t / T \rceil$,  the traffic scenario in 
Fig.~\ref{chp5-fig:fig5-arrival-scenario1} satisfies $\E_A =  \sum_j 
\E_{A_j}$, whereas 
Fig.~\ref{chp5-fig:fig5-arrival-scenario2} yields $\E_A \ll  \sum_j 
\E_{A_j}$. The difference $\E_A -  \sum_j 
\E_{A_j}$ indicates the degree to which the burstiness of a set of flows may increase by 
a different alignment (time shift) of the traffic. We refer to this difference as the 
{\it burstiness potential}. We next explore the burstiness potential of 
the captured  traces from the measurement experiments in Secs.~\ref{sec:LinearAllreduce} and~\ref{sec:RingAllreduce}.


Let us consider the aggregate traffic of the workers to the servers from the Linear-Allreduce experiment in Sec.~\ref{sec:LinearAllreduce}.  
Figs.~\ref{fig:linear-Compare-peak-to-mean-ratio} and~\ref{fig:linear-Compare-maxBacklog} compare the peak-to-mean ratio and the maximum backlog 
metrics  of the aggregate traffic of the workers ($\E_A$,  labeled as `actual') with the metrics obtained by adding the burstiness curves of 
the workers ($\sum_j \E_{A_j}$, labeled as `worst-case').  
The large difference of the metrics of the actual versus the worst-case 
traffic is due to the orchestration of transmissions by the workers, where only one worker transmits to the server at a time. The worst-case scenario shows 
the traffic that may result without such an orchestration.

Figs.~\ref{fig:ring-Compare-peak-to-mean-ratio} and~\ref{fig:ring-Compare-maxBacklog} evaluate the burstiness potential for the traces from the Ring-Allreduce experiment in Sec.~\ref{sec:RingAllreduce}. A comparison of the corresponding 
metrics for the Linear-Allreduce trace gives insight into  differences of the  burstiness characteristics.  For instance, the gap between the burstiness metrics in Ring Allreduce are less than seen for Linear-Allreduce. 
This indicates that the aggregate Ring Allreduce traffic from all workers is not very different from its worst-possible alignment. 

\begin{figure}[t!]
\centering
\subfloat[Linear-Allreduce: Peak-to-mean ratio.]
{\includegraphics[width=0.42\textwidth]{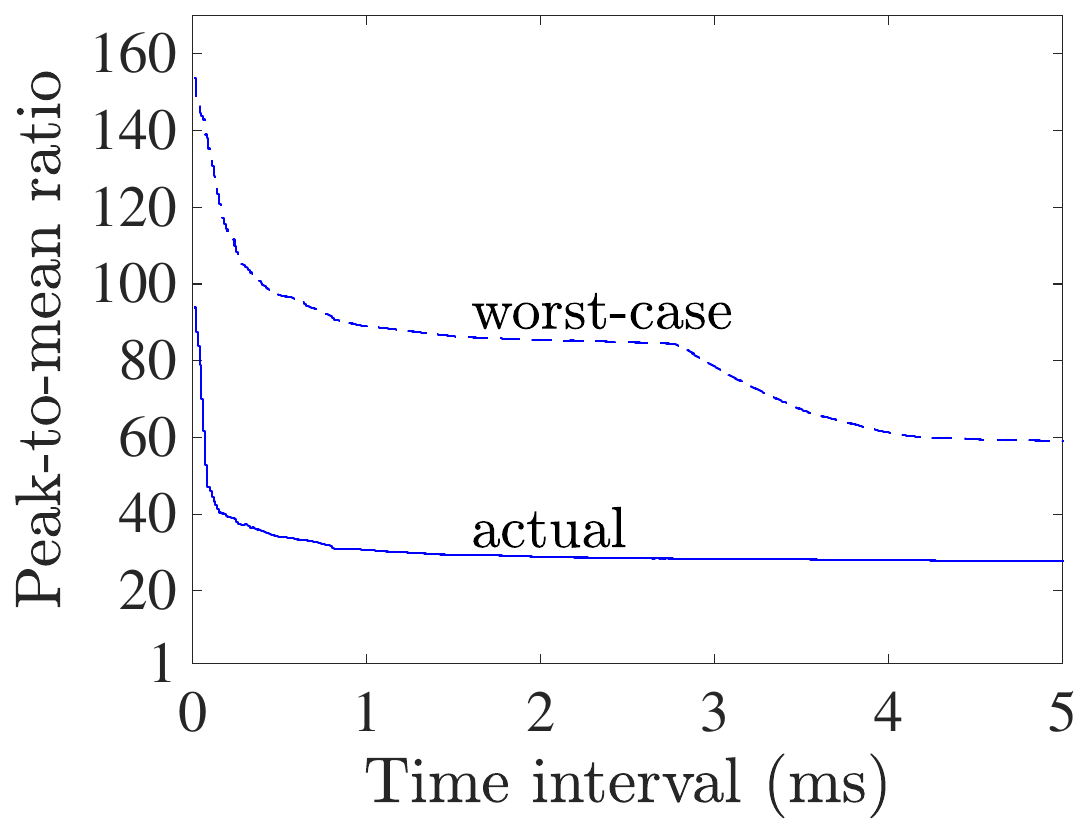}
\label{fig:linear-Compare-peak-to-mean-ratio}}
\hspace{1cm} 
\subfloat[Linear-Allreduce: Maximum backlog.]
{\includegraphics[width=0.42\textwidth]{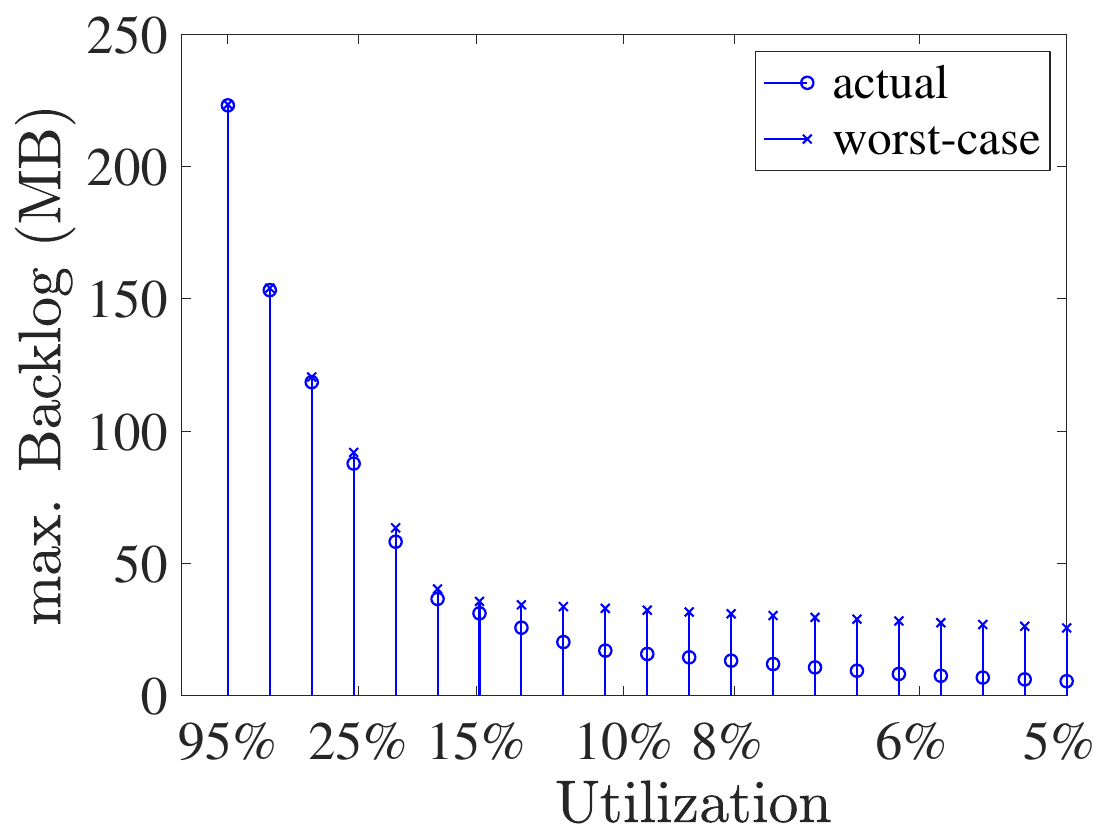}
\label{fig:linear-Compare-maxBacklog}}

\vspace{-10pt}
\centering
\subfloat[Ring-Allreduce: Peak-to-mean ratio.]
{\includegraphics[width=0.42\textwidth]{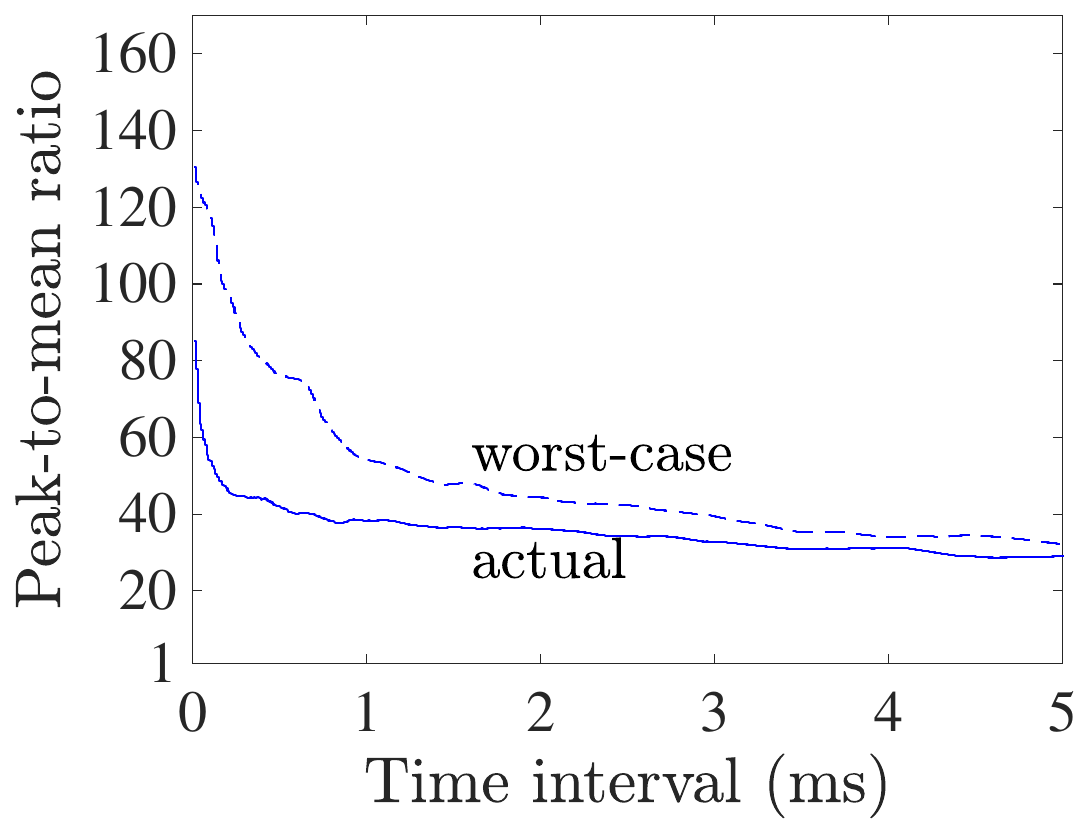}
\label{fig:ring-Compare-peak-to-mean-ratio}}
\hspace{1cm} 
\subfloat[Ring-Allreduce: Maximum backlog.]
{\includegraphics[width=0.42\textwidth]{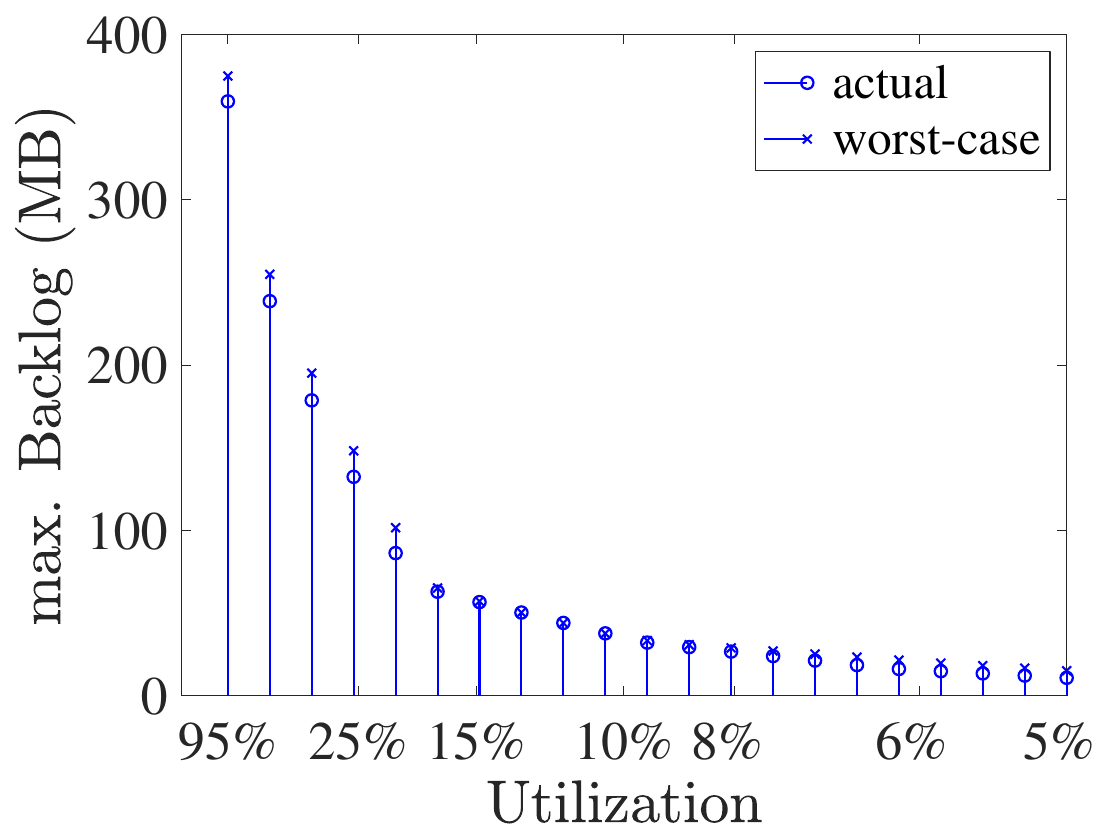}
\label{fig:ring-Compare-maxBacklog}}
\caption{Comparison of actual and worst-case burstiness of traffic.} 
\label{fig:Compare-BurstinessMetrics}

\vspace{-10pt}

\end{figure}

\section{Distributed ML Traffic and Congestion Control}
\label{subsec:simulate-model-MB}
In this section, we use a simulation to demonstrate how congestion control 
deals with simultaneous bursts from distributed training  
due to cross-application fan-in of concurrent ML applications 
in a DCN. The simulations are built using the ns-3 simulator for DCNs \cite{rdmasim-code}.

In the simulation setup, hosts are connected to 
a shared memory 32-port switch with 100\,Gbps links. The propagation delay of each link is set to 1\,$\mu$s.
There are 30 hosts that act as workers of a distributed ML 
program. Each worker independently generates traffic of a worker in the Linear-Allreduce scenario for training of a ResNet-50 model. For each layer of the model, the burst size and time gap until the next burst are slightly randomized  using a uniform distribution in the range of values observed in the measurement traces. 
All workers send their traffic to the same destination. We assume that each worker belongs to a different application and that the transmissions of the 
workers are not coordinated, i.e., multiple workers can transmit at the same time. 
The simulation does not account for processing times at the destination. Hence, the destination does not become a bottleneck in the simulation. 
As in the measurement experiments of Secs.~\ref{sec:LinearAllreduce} and~\ref{sec:RingAllreduce},  data is sent using 
RDMA with PFC and DCQCN enabled, where the parameters for PFC and DCQCN in 
the simulation are set to the same values as in the measurement experiments 
(Table~\ref{table:PFC-DCQCN-parameters} in Appendix~\ref{sec:dcqcn-pfc} lists 
the parameter values). 
 
The simulation investigates the aftermath of an event where the transmissions of 
gradients from the last layer of the ResNet-50 model arrive simultaneously at the switch. 
Recall that the gradients from the last layer are the first batch of gradients that  a worker of a Linear-Allreduce application transmits to the server on the backward pass (Fig.~\ref{fig:DNN-training}).
Such a scenario creates a worst-case congestion event at the switch for two reasons. First, the last layer has the most parameters and, therefore, the transmissions of 
gradients create the largest traffic bursts. Moreover, since there are no transmissions to the server on the forward pass, there is a long idle period 
that precedes the gradient transmissions for the last layer. 
Since DCQCN increments the permitted transmission rate, if no congestion event is reported to the sender, the transmission rate of a worker is maximal at the start of the backward pass. The observed idle times preceding the transmission on the backward pass are long enough so that the permitted transmission rates equal the line rate of 100\,Gbps.

\begin{figure}[t!]
\centering 
%
\subfloat[Total rate of arriving traffic.]{\includegraphics[width=0.7\textwidth]{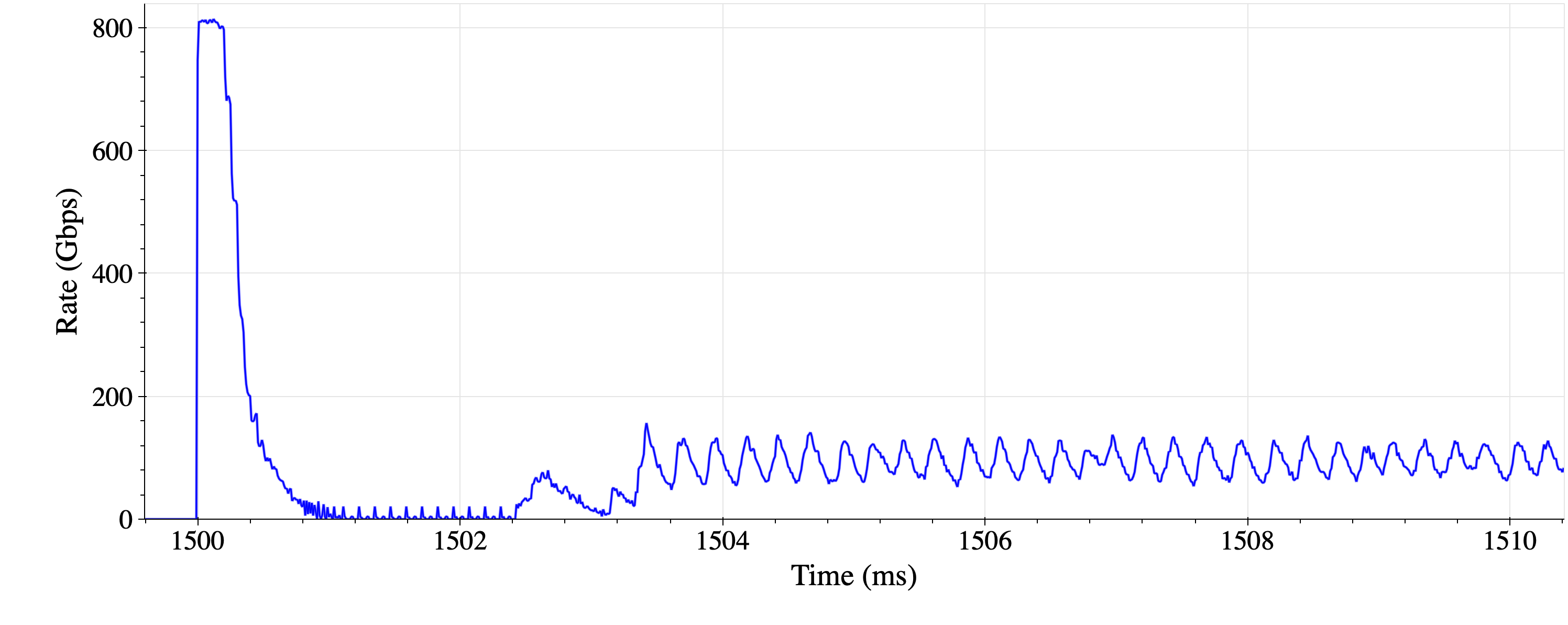}\label{fig:unorchestrated_linear_allreduce_30workers_delay1us_total10ms_rate}}
\hspace{5mm}
\subfloat[Backlog.]{\includegraphics[width=0.7\textwidth]{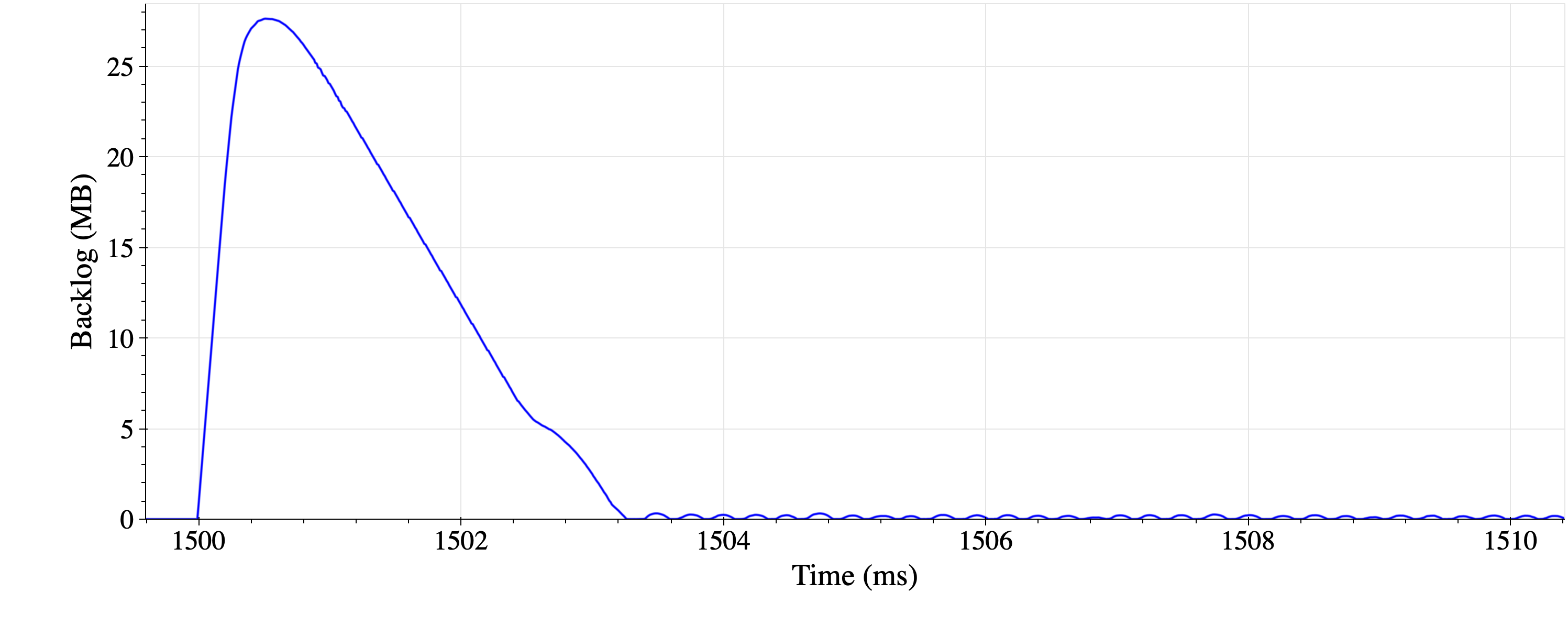}\label{fig:unorchestrated_linear_allreduce_30workers_delay1us_total10ms_backlog}}

\caption{Simulation of distributed ML traffic with simultaneous bursts (Time interval:  10\,ms).} \label{fig:unorchestrated_linear_allreduce_30workers_delay1us_total10ms}

\vspace{-6pt}
\end{figure}

Fig.~\ref{fig:unorchestrated_linear_allreduce_30workers_delay1us_total10ms_rate} shows the traffic rates created by all workers and the backlog at the switch port for a time period of 10\,ms, starting at  $t=1500$\,ms. 
The initial rates  reflect the created scenario where the gradients from the last layer of ResNet-50 are sent  simultaneously by all workers to the switch. Matching the experimental data, each worker transmits bursts  at a rate between $25$ and $35$\,Gbps, as observed in the measurement experiments. 
The  total arrival rate in 
Fig.~\ref{fig:unorchestrated_linear_allreduce_30workers_delay1us_total10ms_rate}
initially jumps to 800\,Gbps and then drops off.
 After the initial period, which is only a fraction of a millisecond, 
DCQCN reduces the total traffic rate  in several steps until it almost reaches  zero, after which it increases again at a small slope. During the rise of the rate, which continues past the shown interval, the arrival rates with {\it DCQCN} oscillate in an almost periodic pattern. 
This oscillation is due to PFC, which shuts off transmissions from a host if 
the backlog of an input buffer exceeds a threshold. 
The backlog plot in Fig.~\ref{fig:unorchestrated_linear_allreduce_30workers_delay1us_total10ms_backlog} shows a long initial backlogged period that grows close to the maximum total backlog permitted by~PFC (set to 9.5 kB/port/Gbps). The initial backlogged period has a duration of several milliseconds.

Fig.~\ref{fig:unorchestrated_linear_allreduce_30workers_delay1us_total800us} zooms in  on the initial backlogged period  by showing 
the time interval $[1500,1500.8]$\,ms. For the DCQCN algorithm, the total rate of arriving traffic in Fig.~\ref{fig:unorchestrated_linear_allreduce_30workers_delay1us_total800us_rate} remains at 800\,Gbps for more than 200\,$\mu$s before it decreases for the first time. 
This raises the question {\it why the workers in Fig.~\ref{fig:unorchestrated_linear_allreduce_30workers_delay1us_total800us_rate} wait 200\,$\mu$s before they reduce their transmission rate to the switch for the first time?} 
The explanation for the delayed rate reduction lies in the operation of 
DCQCN. 
When no transmissions take place, DCQCN 
periodically increases the transmission rate (using additive increase) up to the line rate.

\begin{figure}[t!]
\centering 
%
\subfloat[Total rate of arriving.]{\includegraphics[width=0.7\textwidth]{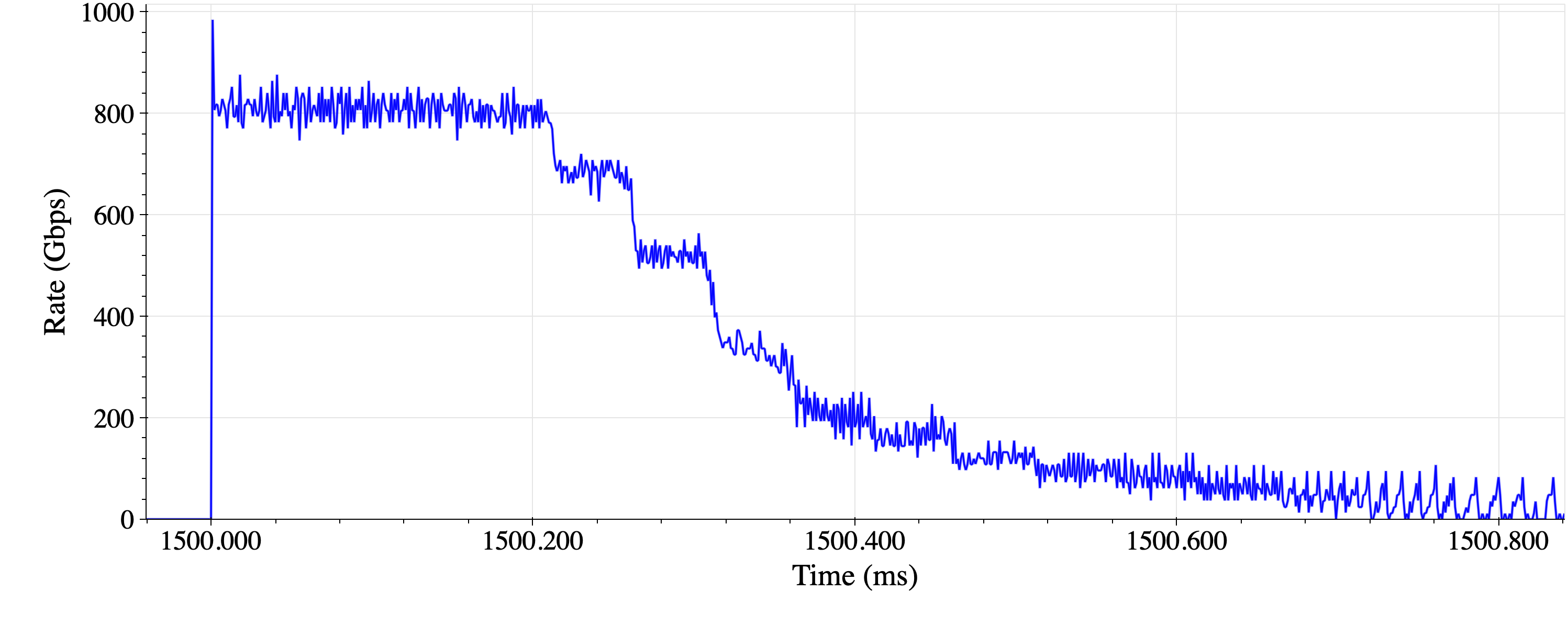}\label{fig:unorchestrated_linear_allreduce_30workers_delay1us_total800us_rate}}
\hspace{5mm}
\subfloat[Backlog.]{\includegraphics[width=0.7\textwidth]{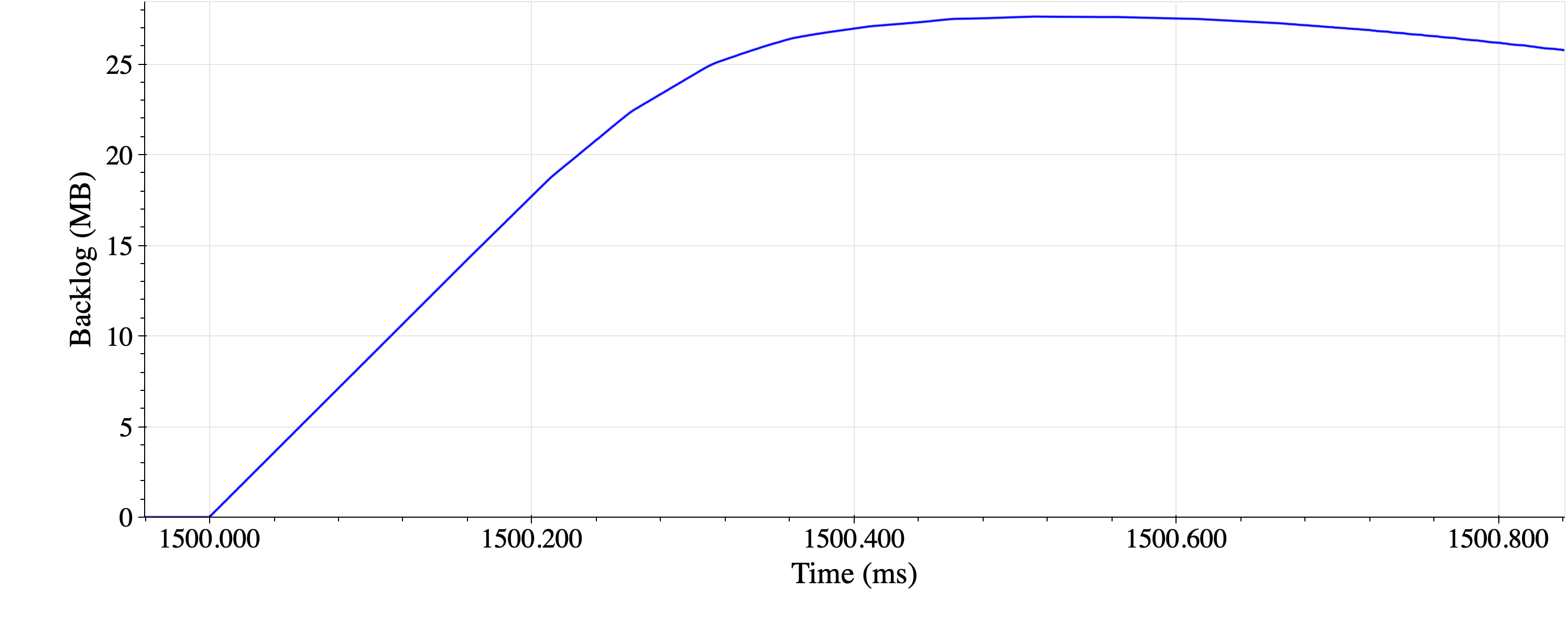}\label{fig:unorchestrated_linear_allreduce_30workers_delay1us_total800us_backlog}}

\caption{Simulation of distributed ML traffic with simultaneous bursts (Time interval:  800\,$\mu$s).} \label{fig:unorchestrated_linear_allreduce_30workers_delay1us_total800us}

\vspace{-10pt}
\end{figure}

Now, when the workers start to transmit at $t=1500$\,ms,
a congestion notification packet (CNP) indeed reaches each worker within a few microseconds. This results 
in a multiplicative decrease of the permitted transmission rate, which is reduced from 100\,Gbps to around 75\,Gbps. However, since the initial 
transmission rate of each  worker is  only 20--35\,Gbps, the reduction of 
the permitted transmission rate has no impact.
Only when DCQCN sets the permitted rate to below 35\,Gbps do  workers 
actually reduce their transmission rates. Since rate reductions are triggered by an arrival of a CNP, and CNPs are sent only once every 50\,$\mu$s (using the 
 value suggested in \cite{DCQCN-Sigcomm2015}), it takes several rounds of CNP transmissions before the first rate reduction is observed.

The simulation of simultaneous large bursts, as observed in DNN training, points to shortcomings of the DCQCN protocol. When congestion occurs, DCQCN reduces the maximum permitted rate 
of the sources. This reduces the actual transmission rates of the sources only if sources have enough data to transmit. As observed, if an application generates data at a rate that is well below the maximum permitted rate, a reduction of the permitted rate has no impact. In 
Fig.~\ref{fig:unorchestrated_linear_allreduce_30workers_delay1us_total800us_rate}, it takes multiple CNP generation intervals before 
any rate reduction takes place. This delay in reacting to a congestion event is inversely proportional to the rate at which applications generate data. The smaller the rate at which a source wants to transmit, the longer the delay until it reacts to a congestion event. 

The delayed response to congestion events has stark consequences. In 
Fig.~\ref{fig:unorchestrated_linear_allreduce_30workers_delay1us_total800us_backlog}, we see that at time $t=1500.2$\,ms, shortly before the first rate reduction, the backlog is around~17\,MB. Had the sources reacted to the first congestion notification after around 10\,$\mu$s, the backlog would have been only around~1\,MB.

\section{Related Work}
\label{sec:related}

{\bf Distributed ML traffic:} 
There is a wealth of recent studies on profiling DNN training \cite{Kesidis22,DDN-perf1,DDN-perf2,DDN-perf3,DDN-perf4,DDN-perf5,DDN-perf6} and reducing communication times \cite{DDN-netperf1,DDN-netperf2,DDN-netperf3,DDN-netperf4}, however, a systematic study of the communication overhead and traffic patterns 
of DNN training does not exist. 
To the best of our knowledge, there is only one prior measurement study of distributed ML traffic \cite{Cambridge1}, with traces available in~\cite{pscode}. These measurements are performed  on a testbed similar to Fig.~\ref{fig:topo-exp} with 10\,Gbps line rates and with TCP as transport protocol.  The trained neural network is a simple  Multilayer Perceptron (MLP) with only two hidden layers and a small parameter set, which completes a round of server-based training in about 5~ms. 
The distributed training simulator  ASTRA-sim~\cite{astrasim1,astrasim2} presents an alternative to measurement experiments. ASTRA-sim can generate network traffic for 
 given DNN models and network configurations, which, in turn, can be used to drive network simulations \cite{astrasim3}.

{\bf Microbursts:} The problems posed by microbursts have been studied for about a decade. 
Work in this area is concerned with measuring and understanding 
microbursts in DCNs 
\cite{microburst1,microburst2,microburst5,microburst6,microburst2-1,microburst-nsdi22, microburst-nsdi23}, as well 
as developing methods for absorbing or reducing microbursts 
\cite{microburst3,microburst4,microburst7,microburst8,microburst9,HPCC-Sigcomm19,Swift-Sigcomm20,microburst-Sigcomm17}. 
The typical workload in these studies consists of Web traffic, distributed caching, and MapReduce, but not distributed ML traffic. 
Microbursts are sometimes defined in terms of specific traffic patterns 
\cite{microburst-nsdi23} or in terms of the impact they cause, such as buffer utilization~\cite{microburst6}, level of congestion~\cite{microburst-Sigcomm17} or loss~\cite{microburst-CCR}. 
Since switch vendors are concerned with detecting microbursts in real-time, they 
specify microbursts in terms of observable events, such as exceeding a configured buffer threshold in a specified time period~\cite{microburst-Cisco}. While microburst detection is orthogonal to the objectives of this paper, 
our proposed burstiness metrics can be adapted to detect bursts in 
real-time, by focusing only on short time intervals.

\section{Conclusions}
\label{sec:conclusion}

We presented measurement experiments that capture the training of a distributed DNN model in both a 
server-based and serverless setting and evaluated the burstiness properties of the traffic. 
The analysis of traffic burstiness was supported by burstiness metrics, that are suitable for 
determining whether a given traffic trace contains large rate spikes. 
The traffic of DNN training exhibited particular patterns that are unlike those seen in other application 
traffic.  In particular, DNN training generates traffic that consists of a series of burst transmissions that alternates with extended time periods  with few or no transmissions. 
The observed bursts  were  due to transmissions of gradients on the backward pass of DNN training, and the (almost) silent periods relate to the forward pass computations without transmissions. 
The gradient transmissions exhibited an extreme degree of short-term burstiness. At the same time, 
distributed ML systems were seen to mitigate the risk of congestion caused by a single  application  
by avoiding concurrent transmissions to the same destination, by either coordinating transmissions by  
workers to a common server or by having each worker transmit gradients to different destinations. 
This coordination, however, does not exist between concurrently running training applications, which 
 may cause congestion in DCNs. While this paper has raised awareness of the burstiness of distributed ML traffic, 
much more research is needed to investigate how DCN network topologies and traffic algorithms, 
accelerators such as GPUs and TPUs, different types of gradient aggregation, and different ML applications 
impact traffic burstiness. 

\vspace{1cm}


\newpage
\begin{appendices}

\section{Review of Ring Allreduce}
\label{sec:primer-ringallreduce}

\begin{figure}[h!]
\centering 
\subfloat[]{\includegraphics[width=0.22\textwidth]{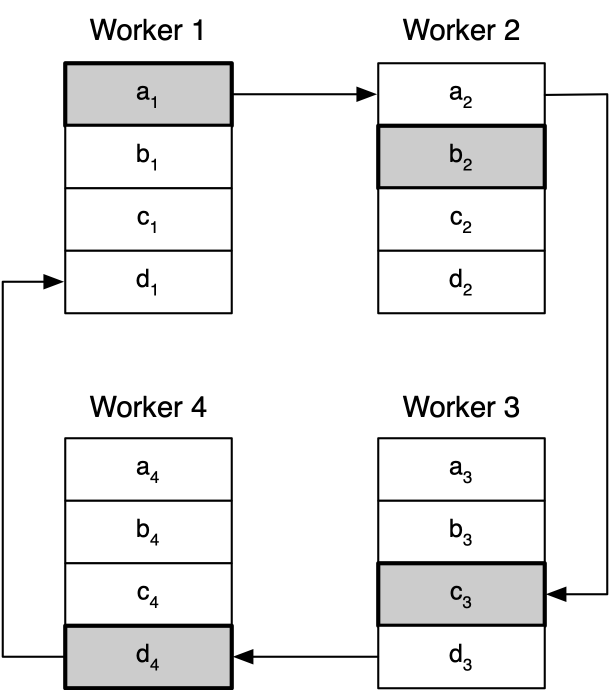}
\label{fig:Ringallreduce-1}}
\subfloat[]{\includegraphics[width=0.22\textwidth]{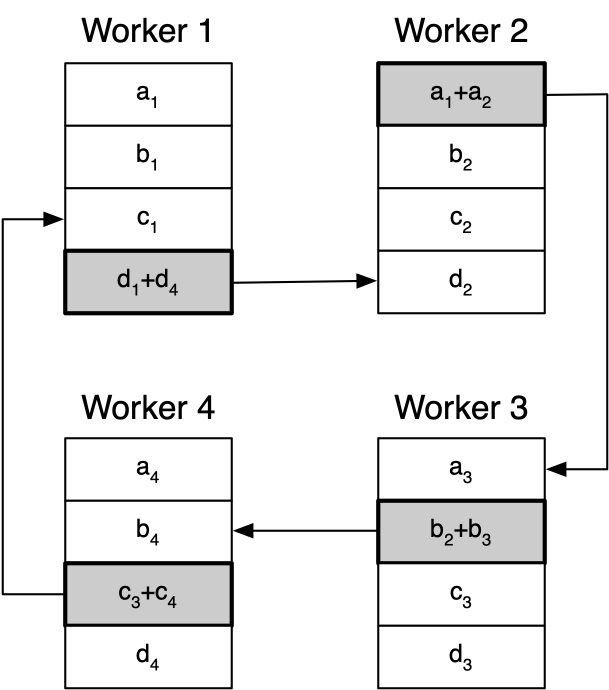}
\label{fig:Ringallreduce-2}}
%
%
%
\subfloat[]{\includegraphics[width=0.22\textwidth]{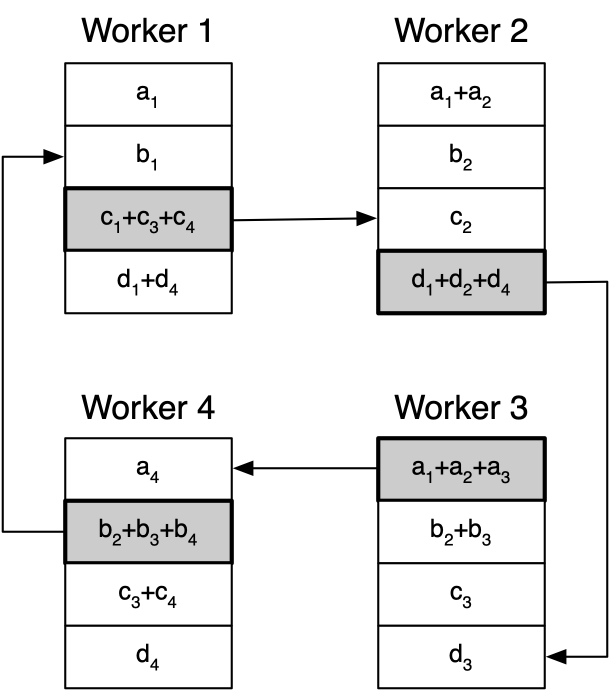}
\label{fig:Ringallreduce-3}}
\subfloat[]{\includegraphics[width=0.22\textwidth]{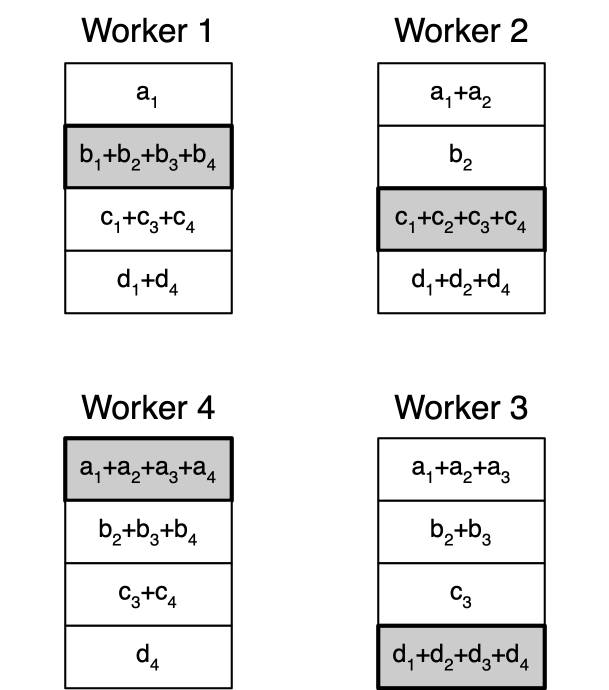}
\label{fig:Ringallreduce-4}}

\caption{Ring Allreduce: Reduce-Scatter phase.} 
\label{fig:ReduceScatter}
%
\centering 
\subfloat[]{\includegraphics[width=0.22\textwidth]{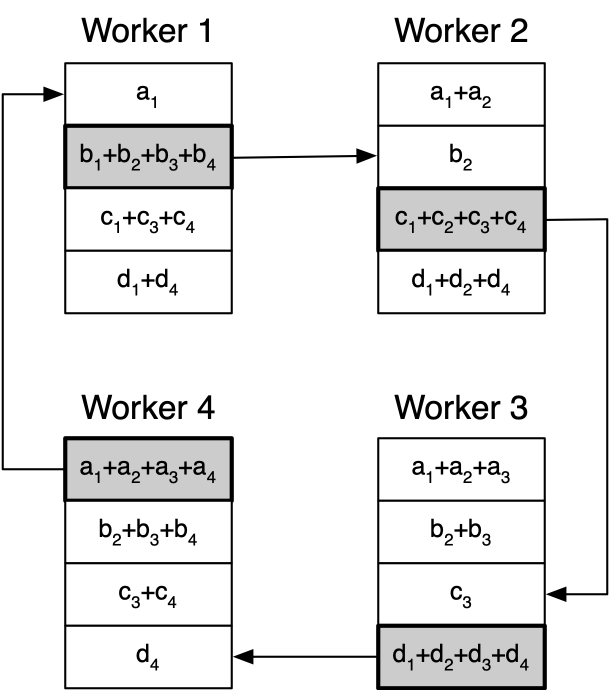}
\label{fig:Ringallreduce-5}}
\subfloat[]{\includegraphics[width=0.22\textwidth]{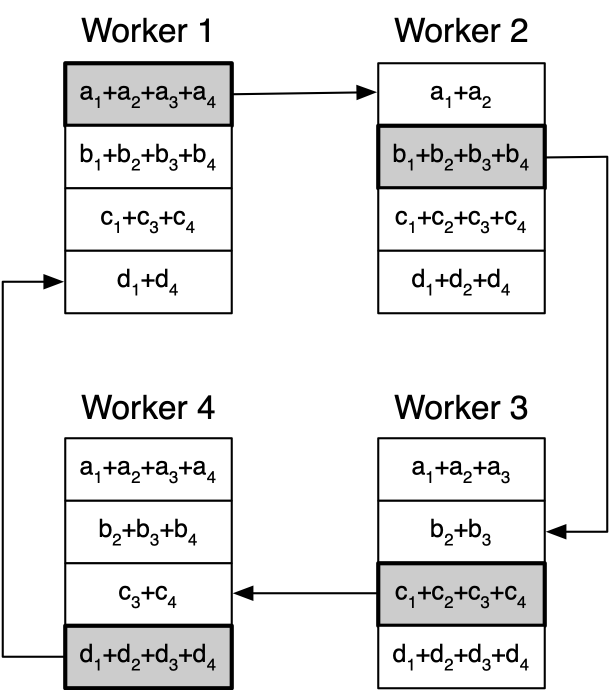}
\label{fig:Ringallreduce-6}}
%
%
%
\subfloat[]{\includegraphics[width=0.22\textwidth]{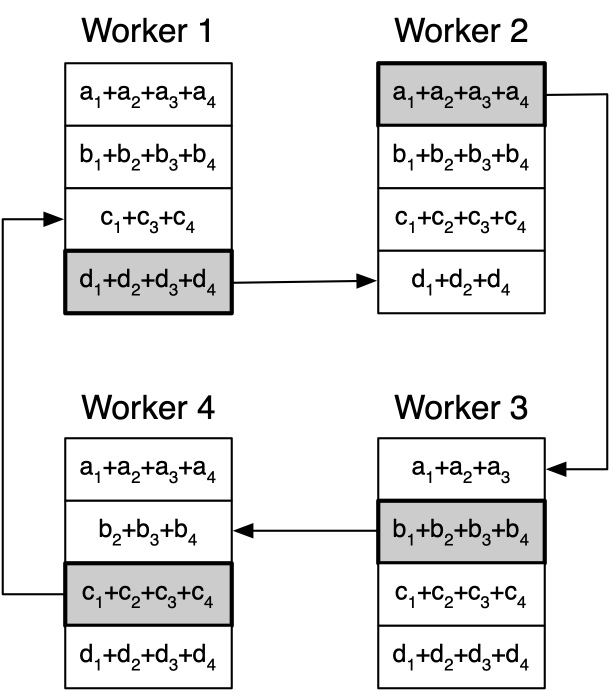}
\label{fig:Ringallreduce-7}}
\subfloat[]{\includegraphics[width=0.22\textwidth]{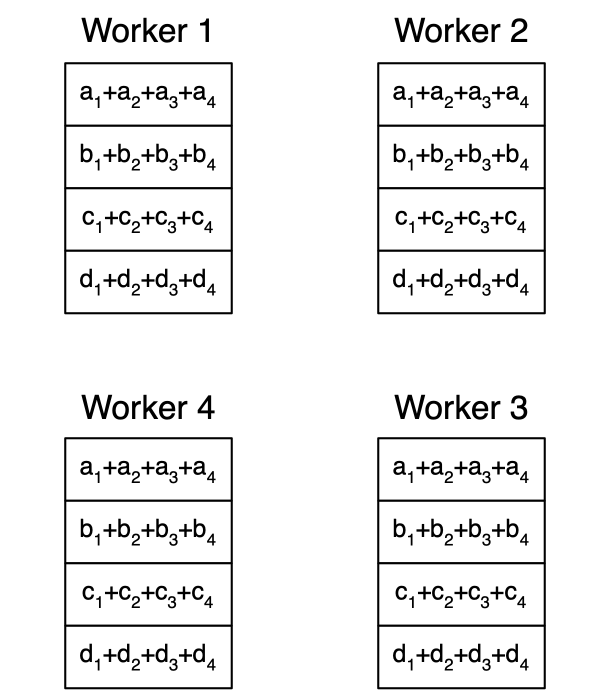}
\label{fig:Ringallreduce-8}}

\caption{Ring Allreduce: Allgather phase.} 
\label{fig:Allgather}
\end{figure}
%

In Ring Allreduce,  the parameter vector is partitioned and equally divided between  workers. An Allreduce operation is performed in two rounds of exchanges, referred to as Reduce-Scatter and Allgather. 
Figs.~\ref{fig:ReduceScatter} and~\ref{fig:Allgather} illustrate an Allreduce between four workers. Each worker~$j$ has a 
vector that consists of elements $a_j, b_j, c_j, d_j$. Suppose the desired  outcome of the depicted Allreduce is that each worker holds the sum 
of the elements from each worker. The Reduce-Scatter phase starts by having each worker 
send a different element to its successor in the logical ring (Fig.~\ref{fig:Ringallreduce-1}). The receiver of an element adds it to its own  and then sends the sum to it successor (Fig.~\ref{fig:Ringallreduce-2}). This  repeats until each worker holds the sum over all workers for one element  (Fig.~\ref{fig:Ringallreduce-4}). In the Allgather phase, the completed sums are  disseminated to all nodes. As before, this is done by having workers send data to their successor in the ring. 
First, each worker sends the element for which it holds  the complete sum (Fig.~\ref{fig:Ringallreduce-5}) to its successor. Upon receiving the data, the successor replaces the value of the element with the received sum. The Allgather  phase completes when each worker holds the complete sum of each element (Fig.~\ref{fig:Ringallreduce-8}). 


\newpage
\section{Review of PFC and DCQCN}
\label{sec:dcqcn-pfc}

Increasingly, DCNs replace TCP by RDMA over Converged Ethernet~(RoCE) as network protocol.  Remote Direct Memory Access (RDMA) 
provides a memory-to-memory data exchange between remote hosts  
without involving host processors. 
RDMA is defined as part of the Infiniband standard~\cite{infiniband}.  
RoCEv2, which performs RDMA with UDP encapsulation~\cite{rocev2}, supports a `lossless Ethernet' service that avoids packet drops due to buffer overflows at switches. This is realized with the Priority-based Flow Control (PFC) protocol \cite{pfc}. Most RoCEv2 networks also provide 
end-to-end congestion control via the Data Center Quantized Congestion Notification (DCQCN) algorithm~\cite{DCQCN-Sigcomm2015}. 

PFC is a link-level flow control mechanism  between two switches or between a switch and a host. 
PFC separates a pair of link-layer endpoints into a sender and a receiver.  Either endpoint can be  a sender or a receiver depending on the direction of  data transmission. 
PFC  uses  
a backlog threshold $X_{\rm off}$ at an ingress port of a receiver to indicate that a link is congested. 
When this occurs, a receiver temporarily stops 
transmissions from the sender by sending it a  \emph{Pause} packet. By pausing transmissions sufficiently early, buffer overflows at the receiver can be fully prevented. 
Transmissions by the sender are paused until the backlog at the receiver reaches $X_{\rm on}$ ($X_{\rm on}< X_{\rm off})$.  
The necessary pause time until the backlog shrinks to $X_{\rm on}$ 
is included in the {\it Pause} packet.

DCQCN is an end-to-end  congestion control mechanism for RDMA that runs concurrently with PFC. It is a de-facto standard congestion control 
method for RoCEv2 traffic. 
The congestion control is based on Random Early Detection (RED) \cite{RED} for  detecting congestion and 
Explicit Congestion Notification (ECN) \cite{ECN} for reporting 
congestion.  RDMA uses so-called congestion notification packets (CNPs) 
to notify senders of congestion events. 
Marking of packets at a switch is governed by the RED algorithm, which marks packets with a  probability that is 
based on the backlog at an egress port. 
The arrival of a marked packet at a destination host indicates congestion in the network. If this happens the destination host issues 
a CNP to the source host, signaling that the source should 
reduce its rate. 
A CNP generation interval ($I_{\rm CNP}$) dictates the minimum elapsed time between two consecutive CNP transmissions (default value: 50\,$\mu$s).  
DCQCN reacts to congestion by reducing the transmission rate of the source host. In the absence of congestion, the transmission rate is increased. DCQCN overall follows an additive-increase-multiplicative-decrease approach for setting transmission rates. The details of the rate algorithm are 
relatively complex with about 15 configurable parameters. 
We refer to~\cite{DCQCN-Sigcomm2015} for a detailed discussion of 
the parameters. 

The configurations of PFC and DCQCN in the experimental setup (Sec.~\ref{subsec:testbed}) and  the simulation (Sec.~\ref{subsec:simulate-model-MB}) both use the parameters given in Table~\ref{table:PFC-DCQCN-parameters}. 

	\begin{table}[h!]
\caption{Parameter setting for PFC and DCQCN.}
\begin{center}
\begin{tabular}{| l | l |}
\hline 
$X_{\rm off}$ & 	9.5 kB/port/Gbps \\
$X_{\rm on}$ & 	9.25 kB/port/Gbps \\\hline 
$K_{\rm min}$ & 	7 kB \\
$K_{\rm max}$ & 	488 kB \\
$P_{\rm max}$ & 	30\% \\
$g$  & 	1/256 \\
$I_{\it CNP}$ & 50 $\mu$s \\
$K, T$ & 55 $\mu$s \\
$B$ & 10 MB \\
$R_{AI}$ & 5 Mbps \\
$R_{HI}$ & 50 Mbps\\
\hline 
\end{tabular}
\end{center}
\label{table:PFC-DCQCN-parameters}
\end{table}%
\end{appendices}

\end{document}